\documentclass[a4paper,fleqn]{cas-dc}
\usepackage[numbers]{natbib}
\usepackage[switch]{lineno}
\usepackage{booktabs}
\usepackage{multirow}
\usepackage{graphicx}
\usepackage{amsmath,amsthm}
\usepackage{float}
\usepackage{array}
\usepackage{listings}
\usepackage{fvextra}
\setcitestyle{maxcitenames=4}

\newtheorem{definition}{Definition}
\newtheorem{lemma}{Lemma}
\newtheorem{theorem}{Theorem}
\newtheorem{corollary}{Corollary}

\usepackage{xcolor}
\usepackage{ifthen}

\newboolean{showcomments}    
\newboolean{shownewcontent}  
\newboolean{showrevision}
\newboolean{draft4review}   

\setboolean{showcomments}{true}
\setboolean{shownewcontent}{false}
\setboolean{showrevision}{false}
\setbool{draft4review}{false}
\ifthenelse{\boolean{draft4review}}{
  \settopmatter{printacmref=false}
  \renewcommand\footnotetextcopyrightpermission[1]{}
}{
}

\ifthenelse{\boolean{showcomments}}{%
  \newcommand{\NB}[3]{%
    {\colorbox{#3}{\bfseries\sffamily\scriptsize\textcolor{white}{#1}}}%
    {\textcolor{#3}{\sffamily\small$\langle$\textit{#2}$\rangle$}}%
  }%
}{%
  \newcommand{\NB}[3]{}%
}

\ifthenelse{\boolean{shownewcontent}}{%
}{%
}

\ifthenelse{\boolean{showrevision}}{%
  \newcommand{\revise}[1]{{\color{blue}#1}}%
}{%
  \newcommand{\revise}[1]{#1}%
}

\newcounter{rq}
\newcounter{rqfinding}

\newcommand{\setrq}[1]{%
  \setcounter{rq}{#1}%
  \setcounter{rqfinding}{0}%
}

\newcommand{\Finding}[1]{%
  \refstepcounter{rqfinding}%
  \par\vspace{0.35\baselineskip}%
  \noindent\textbf{Finding \arabic{rq}.\arabic{rqfinding}:}~#1\par
}

\newcommand{\Implication}[1]{%
  \noindent\textbf{Implication:}~#1\par
}

\def\tsc#1{\csdef{#1}{\textsc{\lowercase{#1}}\xspace}}
\tsc{WGM}
\tsc{QE}
\tsc{EP}
\tsc{PMS}
\tsc{BEC}
\tsc{DE}

\begin{document}
\let\WriteBookmarks\relax
\def\floatpagepagefraction{1}
\def\textpagefraction{.001}
\shorttitle{Logic-Grounded Metamorphic Testing for Evaluating the Reasoning Reliability of LLMs}
\shortauthors{Zenghui Zhou et~al.}

\title [mode = title]{LGMT: Logic-Grounded Metamorphic Testing for Evaluating the Reasoning Reliability of LLMs}                      




\affiliation[1]{
organization={School of Automation Science and Electrical Engineering, Beihang University},
addressline={Xueyuan Road, Haidian District}, 
city={Beijing},
postcode={100191}, 
country={China}
}

\author[1]{Zenghui Zhou}
\credit{Conceptualization, Investigation, Code, Visualization, Data curation, Writing}

\author[1]{Man Li}
\credit{Investigation, Code, Visualization, Data curation}

\author[1]{Xiaoke Fang}
\credit{Investigation, Code, Data curation}

\author[1]{Xinyi Zhou}
\credit{Investigation, Code, Data curation}

\author[1]{Weibin Lin}
\credit{Code}

\author[1]{Zheng Zheng}\cormark[1]
\ead{zhengz@buaa.edu.cn}
\credit{Conceptualization, review \& editing, Funding acquisition}

\cortext[cor1]{Corresponding author}

\begin{abstract}
Large Language Models (LLMs) achieve strong performance on logical reasoning benchmarks, yet their reliability remains uncertain. Existing evaluations rely on static benchmarks, which fail to assess robustness under logically equivalent transformations and often overestimate reasoning capability.
We propose LGMT (Logic-Grounded Metamorphic Testing), an oracle-free framework that leverages first-order logic (FOL) to evaluate LLM reasoning. By deriving metamorphic relations from formal logical equivalences, LGMT constructs semantically invariant test cases and detects reasoning defects through cross-case consistency checking.
Experiments on six state-of-the-art LLMs show that LGMT exposes substantial hidden defects missed by traditional reference-based evaluations. We further find that models are particularly sensitive to symbol-level and conclusion-level variations, and that advanced prompting such as Few-shot CoT only partially mitigates these issues.
These results suggest that LLM evaluation should move beyond isolated correctness toward robustness under logical invariance. LGMT provides a principled and scalable approach for diagnosing reasoning failures.
\end{abstract}

\begin{keywords}
Large Language Models \sep Metamorphic Testing \sep Logical Reasoning \sep First-Order Logic
\end{keywords}

\maketitle

\section{Introduction}
\label{sec:intro}

Large Language Models (LLMs) have been increasingly integrated into modern software systems, powering a wide range of applications such as automated question answering~\cite{deepseek2025deepseekv3,lewis2020retrievalaugmented,yue2025survey,zhuang2023toolqa}, code generation~\cite{deepseek-ai2024deepseekcoderv2,jimenez2024swebench,roziere2024code}, and intelligent assistants~\cite{guan2023intelligent,park2024thinking}. In many of these applications, LLMs are expected not only to generate fluent responses, but also to perform reliable logical reasoning over given facts, rules, and constraints~\cite{holliday2024conditional,luo2023logiglue,singh2024are}. This property is critical for LLM-integrated software systems, because inconsistent reasoning under semantically equivalent inputs may lead to unreliable decisions even when the model appears accurate on individual benchmark instances~\cite{huang2025survey, woesle2025systematic, xu2025are, wan2024logicasker}.
As LLMs increasingly act as knowledge-intensive components in software systems, rigorously testing whether their reasoning results remain stable under equivalent semantics is a prerequisite for ensuring reliability before practical deployment~\cite{ghosh2025logical,liu2025evaluating, yu2020reclor,zhang2025system}.

However, testing the logical reasoning reliability of LLMs faces fundamental challenges. Currently, the dominant testing paradigm relies heavily on reference-based evaluation using static benchmarks (e.g., FOLIO~\cite{han2024folio}, LogicBench~\cite{parmar2024logicbench}, and LogiQA~\cite{liu2021logiqa}). This paradigm inherently suffers from a severe \textit{Test Oracle Problem}: manually labeling expected logical outputs for diverse inputs is prohibitively expensive and fundamentally cannot scale to support extensive, automated testing. 
Moreover, the static nature of these datasets makes them highly susceptible to \textit{data contamination} (i.e., benchmark leakage)~\cite{xu2024benchmarking}, where LLMs may inadvertently memorize the test cases during pre-training. This may create a false illusion of reasoning reliability rather than reflecting their true out-of-distribution generalization capabilities. Consequently, static benchmarks often suffer from poor timeliness and truncated lifecycles. To accurately diagnose the true capabilities of LLMs, evaluators are forced into an exhausting cycle of continuously constructing entirely new, unseen datasets~\cite{qi2024large}. This perpetual "cat-and-mouse" game between benchmarks and model training incurs an unsustainable maintenance cost, further exacerbating the oracle problem~\cite{dziri2023faith}.
Furthermore, these static benchmarks typically formulate logical reasoning as binary, ternary, or multiple-choice classification tasks. Relying solely on such static labels for validation is intrinsically unreliable. Due to the restricted output space, LLMs can achieve seemingly high accuracy through random guessing or by exploiting spurious statistical artifacts (i.e., textual shortcuts) rather than engaging in genuine step-by-step logical deduction~\cite{saparov2022language}. This \textit{coincidental correctness} severely masks their underlying logical deficits~\cite{li2024drowzee}.

To address these limitations, recent studies have introduced Metamorphic Testing (MT) as a zero-resource, black-box testing methodology for LLMs~\cite{chen2019metamorphic, li2024drowzee, wu2025detecting, yang2025hallucinationa}. By examining whether the outputs of source and follow-up test cases satisfy predefined Metamorphic Relations (MRs), MT effectively alleviates the oracle problem~\cite{chen1998metamorphic}. Crucially, MT shifts the evaluation focus from individual test case to the preservation of invariant properties. This characteristic makes MT inherently more resilient to data contamination, as simply memorizing training samples is insufficient to guarantee consistency across diverse MRs. Furthermore, the cross-case validation mechanism in MT helps mitigate coincidental correctness, as it is significantly more challenging for LLMs to maintain consistency through random guessing than to pass a single static test.

Despite its promise, current MT approaches for NLP tasks predominantly rely on \textit{lexical-level} natural language perturbations, such as synonym substitution or syntactic paraphrasing~\cite{chen2021testing,cho2025metamorphic,wu2025detecting,yang2025hallucinationa}. Since natural language is inherently ambiguous, these informal input transformations frequently introduce \textit{semantic drift} or grammatical inconsistencies. Recent large-scale empirical studies demonstrate that such natural-language-based MRs often yield high false positive rates~\cite{cho2025metamorphic}. Consequently, when a test fails, developers struggle to distinguish whether the failure stems from a genuine logical defect in the LLM or a follow-up test case that inadvertently altered the original underlying logic.

To overcome these limitations, we propose \textbf{LGMT}, a logic-grounded metamorphic testing framework for systematically evaluating the logical reasoning reliability of LLMs. The key novelty of LGMT lies in grounding metamorphic testing in First-Order Logic (FOL): instead of relying on informal lexical mutations, we derive MRs directly from formally verified FOL equivalence laws, such as De Morgan's laws, contraposition, and quantifier transformations. This design elevates metamorphic testing from lexical-level perturbation to \emph{logic-preserving transformation}, providing a mechanism for generating follow-up test cases whose symbolic semantics are guaranteed by construction. As a result, LGMT substantially reduces the semantic drift that has long limited natural-language-based MT. More importantly, by validating whether LLMs preserve their judgments across logically equivalent test cases, LGMT moves evaluation beyond isolated correctness on static benchmarks and toward robustness under logical invariance. This allows the framework not only to expose hidden reasoning defects that traditional testing misses, but also to provide a formally grounded alternative for oracle-free reliability assessment of LLM reasoning.

To the best of our knowledge, this is among the first studies to ground MRs in FOL for systematically assessing the logical reasoning reliability of LLMs. The main contributions of this paper are summarized as follows:
\begin{itemize}
    \item We propose \textbf{LGMT}, a metamorphic testing framework that derives input transformations from FOL and evaluates whether LLM entailment judgments remain consistent across source and follow-up cases.

    \item We design 20 types of FOL-grounded metamorphic relations across formula, symbol, premise, and conclusion-level transformations, enabling systematic generation of logic-preserving follow-up test cases.

    \item We provide a formal analysis of MR-E1 with respect to PNNF-oriented normalization, showing that the core equivalence rules are sufficient for reaching a canonical normalized representation under the proposed rewrite procedure.

    \item We construct 76,298 candidate metamorphic groups from three reasoning benchmarks and evaluate 3,091 sampled groups across six LLMs and four prompting strategies.
    
    \item We empirically show that LLMs exhibit non-trivial inconsistency under FOL-grounded transformations, with particularly high sensitivity to symbol and conclusion-level variants, and an audit of sampled reports suggests a low false-report rate.
\end{itemize}

\section{Preliminaries}
\label{sec:preliminaries}
\revise{In this section, we first introduce first-order logic and logical entailment, then formalize MT for LLM reasoning evaluation.
We then present motivating examples to illustrate the limitations of current LLMs and to clarify why FOL-grounded MRs provide a more reliable alternative to lexical MRs.}

\subsection{First-Order Logic and Logical Entailment}

\textbf{Syntax of First-Order Logic.}
FOL is a formal symbolic system extensively utilized to represent complex facts and rules~\cite{cao2025advanced,pan2023logiclm}. Let $\mathcal{L}$ be a first-order language defined over predicates (representing properties and relations), constants (denoting specific entities), and variables. The formulas in $\mathcal{L}$ are recursively constructed from these components using a set of logical connectives, including negation ($\neg$), conjunction ($\wedge$), disjunction ($\vee$), implication ($\rightarrow$), and biconditional ($\leftrightarrow$)~\cite{han2024folio}. To enable quantification over entities within a domain, FOL further introduces universal ($\forall$) and existential ($\exists$) quantifiers. This compositional structure allows FOL to precisely capture the latent logic of natural language sentences, thereby translating linguistic forms into rigorous symbolic representations~\cite{jiang2025large, xu2025are}.

\textbf{Formalizing Logical Reasoning.}
In the context of reasoning evaluation, a test case $x$ is formally abstracted as a premise--conclusion pair $x = \langle \Gamma, q \rangle$. Here, $\Gamma = [p_1, p_2, \dots, p_m]$ constitutes a finite set of FOL formulas representing the given premises (i.e., background facts and deductive rules), and $q$ is an FOL formula denoting the target conclusion \cite{parmar2024logicbench}. The core objective of the LLM $f(\cdot)$ is to function as a deductive reasoner, determining whether the premises semantically entail the conclusion, denoted as $\Gamma \models q$. Formally, this entailment holds if and only if every interpretation (or world model) that satisfies all formulas in $\Gamma$ also satisfies $q$ \cite{olausson2023linc}. In this paper, we adopt a three-valued entailment setting:
\textit{True} iff $\Gamma \models q$,
\textit{False} iff $\Gamma \models \neg q$,
\textit{Unknown} otherwise.

\textbf{Logical Equivalence and Logical Semantic Preservation.}
To construct MRs, we leverage the foundational property of \textit{logical equivalence}. Two formulas $\phi$ and $\psi$ are logically equivalent, denoted as $\phi \equiv \psi$, if they have identical truth values under all possible interpretations~\cite{parmar2024logicbench}. In FOL, equivalence laws such as De Morgan's laws, the Law of Contraposition, and Double Negation Elimination constitute strict equivalence transformations.
Suppose we apply these laws to transform a source test case $\langle \Gamma, q \rangle$ into a follow-up test case $\langle \Gamma', q \rangle$. Because FOL mathematically guarantees $\Gamma \equiv \Gamma'$, it strictly preserves the invariance of the entailment relationship:
$$ \Gamma \models q \iff \Gamma' \models q $$

Consequently, FOL-grounded transformations preserve the underlying logical semantics. Since the transformed formulas are later translated into natural language, we further audit the generated follow-up cases empirically to account for possible realization errors. Under this design, reported metamorphic violations can be interpreted as logical reasoning defects unless they are attributable to translation drift or output parsing errors.

\subsection{Metamorphic Testing for Large Language Models}
\label{subsec:formalizing_mt}

In the paradigm of software testing, the System Under Test (SUT) is typically abstracted as a target function $f: X \rightarrow Y$, mapping an input space $X$ to an output space $Y$. Traditional testing methodologies rely heavily on a \textit{test oracle} to verify whether the actual execution output $f(x)$ matches the expected ground truth for a given input $x$~\cite{chen2019metamorphic}. However, for highly complex systems such as LLMs, acquiring an exact oracle for arbitrary inputs is prohibitively expensive, leading to the severe oracle problem~\cite{li2024drowzee,wu2025detecting}. MT alleviates this fundamental bottleneck by shifting the verification from individual test case to the properties among multiple executions.

The core of MT lies in defining MRs, which specify the necessary properties that the target function $f$ must satisfy across multiple executions. Formally, given a tuple of $n$ test inputs $(x_1, \dots, x_n)$ where $n \ge 2$, an MR is defined as a logical implication:
$$ \mathcal{R}_i(x_1, \dots, x_n) \Rightarrow \mathcal{R}_o(f(x_1), \dots, f(x_n)) $$
where $\mathcal{R}_i$ represents the \textit{input relation} among the inputs, and $\mathcal{R}_o$ represents the \textit{output relation} expected to hold among the corresponding outputs~\cite{cho2025metamorphic}. For the sake of conciseness in the following discussion, we denote a metamorphic relation as the pair $\langle \mathcal{R}_i, \mathcal{R}_o \rangle$.

In a standard MT workflow, testing begins with the selection of a \textit{source test case} $x_s$. A specific input transformation is then applied to derive a \textit{follow-up test case} $x_f$, strictly ensuring that the predefined input relation holds true (i.e., $\mathcal{R}_i(x_s, x_f) = \text{True}$). 
The combination of the source and follow-up test cases constitutes a \textit{metamorphic test group (MG)}\footnote{In classical MT literature, an MG encompasses the complete set of source and follow-up test cases required for one instance of checking the relevant MR. Since our MRs strictly involve a one-to-one mapping, we define the MG as a bi-tuple without loss of generality.}, denoted as the tuple $\langle x_s, x_f \rangle$. This MG is subsequently fed into the LLM under test to yield the corresponding outputs $f(x_s)$ and $f(x_f)$. A \textit{metamorphic oracle violation} occurs if and only if the inputs satisfy $\mathcal{R}_i$ while the corresponding outputs fail to satisfy $\mathcal{R}_o$. Consequently, a logical reasoning defect is detected when the following Boolean expression evaluates to \text{True}:
$$\mathcal{R}_i(x_s, x_f) \wedge \neg \mathcal{R}_o(f(x_s), f(x_f))$$

To illustrate this workflow, consider evaluating the logical reasoning capability of an LLM. Suppose we sample a source test case $x_s$ formulated as a natural language query: ``\textit{Premise: All birds can fly. Tweety is a bird. Conclusion: Tweety can fly. Is this valid?}''. Due to the absence of a test oracle, the correctness of the LLM's output $f(x_s)$ is unverifiable. However, by designing an MR based on the logical \textit{Law of Contraposition}, we can transform $x_s$ into a follow-up test case $x_f$: ``\textit{Premise: Anything that cannot fly is not a bird. Tweety is a bird. Conclusion: Tweety can fly. Is this valid?}''. Given the strict logical equivalence guaranteed by the Law of Contraposition under FOL, the two queries pose the exact same reasoning task. Therefore, the expected output relation $\mathcal{R}_o$ requires that the LLM must yield semantically equivalent responses for both queries (e.g., affirming the validity of both). If the LLM predicts $f(x_s)$ to be valid but $f(x_f)$ to be invalid, a metamorphic oracle violation is triggered, capturing a genuine logical reasoning defect in the model.
This example demonstrates how logic-based MT can expose internal reasoning inconsistencies without requiring ground truth. To provide a clear synthesis of how these elements align with the formal definitions of MT, we summarize the mapping between our formal notations and this concrete reasoning instance in Table~\ref{tab:mt_mapping}. 

\begin{table*}[pos=htbp, width=\textwidth]
\caption{Mapping of MT Component to the Logical Reasoning Case Study.}
\label{tab:mt_mapping}
\begin{tabular}{lcp{11cm}}
\toprule
\textbf{MT Component} & \textbf{Notation} & \textbf{Instantiation in Logical Reasoning Example} \\ \midrule
Source Test Case & $x_s$ & ``All birds can fly. Tweety is a bird. Conclusion: Tweety can fly.'' \\
Follow-up Test Case & $x_f$ & ``Anything that cannot fly is not a bird. Tweety is a bird. Conclusion: Tweety can fly.'' \\
Input Relation & $\mathcal{R}_i$ & Logical equivalence via the \textit{Law of Contraposition}. \\
Output Relation & $\mathcal{R}_o$ & Semantic consistency between the LLM's responses $f(x_s)$ and $f(x_f)$. \\
Metamorphic Relation & $\langle \mathcal{R}_i, \mathcal{R}_o \rangle$ & The constraint that queries transformed via the Law of Contraposition must yield semantically consistent reasoning results with the source queries. \\
Metamorphic Oracle Violation & $\mathcal{R}_i \wedge \neg \mathcal{R}_o$ & The LLM confirms the validity for $x_s$ but rejects it for $x_f$ (or vice versa). \\ \bottomrule
\end{tabular}
\end{table*}

\subsection{Motivating Example}

In this section, we present concrete examples to illustrate the inherent limitations of current LLM evaluation paradigms. First, we highlight why MT is essential for exposing logically inconsistent reasoning that standard reference-based testing obscures. Second, we demonstrate why we design MRs based on FOL rather than lexical transformations to prevent semantic drift.

\subsubsection{Reference-based Testing vs. MT}
Consider the logical entailment test case illustrated in Figure~\ref{fig:motivationExample1}. In the first query, the model is presented with a set of premises and a conclusion. The model answers \emph{``Unknown''}, correctly aligning with the ground-truth label to indicate it cannot derive the conclusion from the given premises. In the second query, we modify only a single premise by applying an equivalence transformation. Specifically, the rule stating that \emph{``all social media applications have chat features or video features''} is rewritten using implication elimination. All other premises, the conclusion, and the prompt format remain unchanged. 

\begin{figure}[pos=htbp]
    \centering
    \includegraphics[width=\columnwidth]{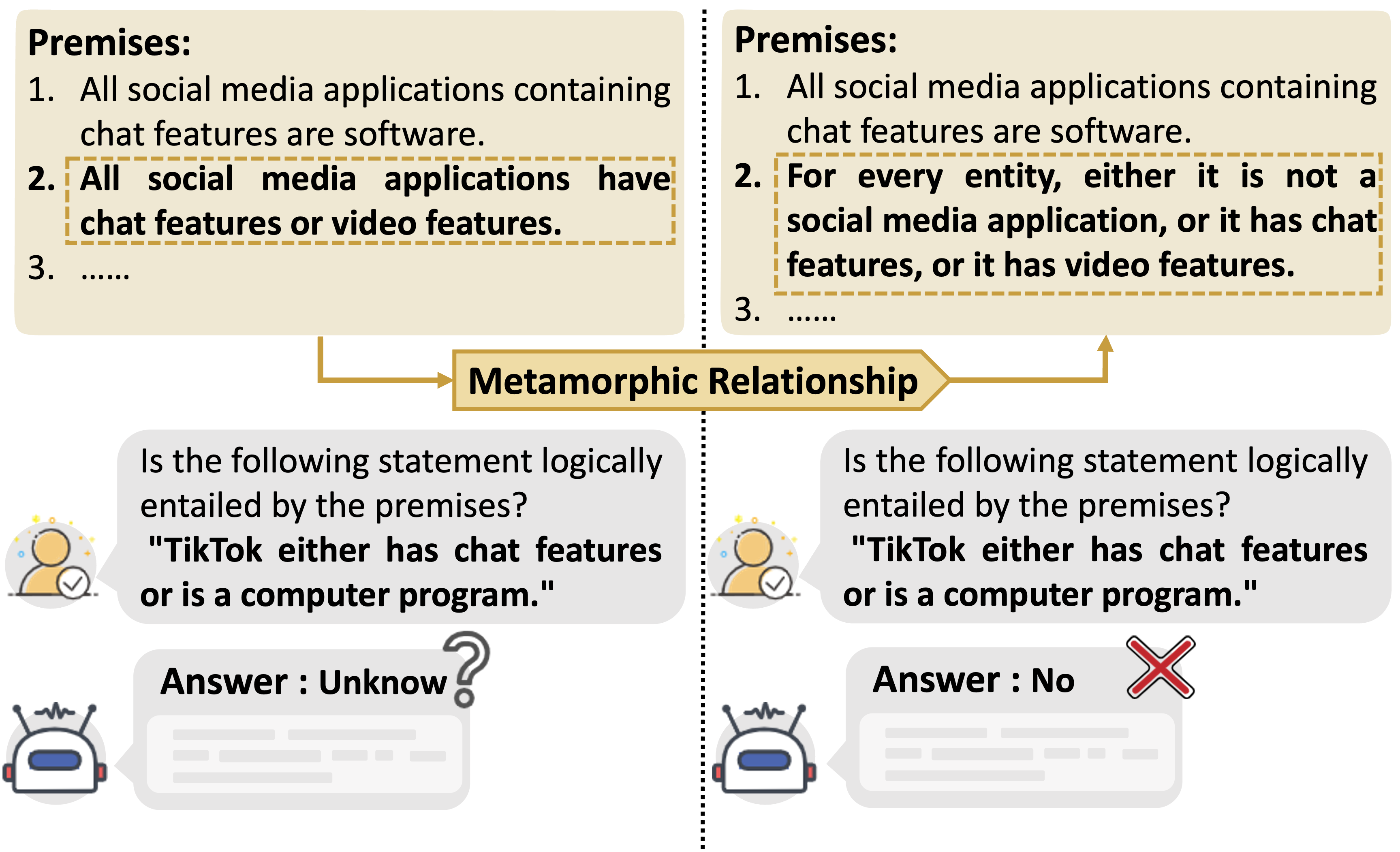}
    \caption{An illustration of an LLM's failure to maintain consistency across logically equivalent reasoning tasks.}
    \label{fig:motivationExample1}
\end{figure}

Under FOL, the transformed premise is logically equivalent to the original one. Therefore, the entailment relationship must remain unchanged. However, the model returns a logically inconsistent answer: \emph{``No,''} indicating a clear failure to maintain consistency across equivalent logical reasoning tasks.

Traditional reference-based testing evaluates these queries in isolation against a ground-truth label, scoring them merely as ``correct'' or ``incorrect.'' Such an evaluation obscures a critical issue: a single ``correct'' answer---such as the one achieved in the first query---may merely result from random guessing or superficial pattern matching rather than genuine logical reasoning. As demonstrated, one isolated success may provide a false sense of reasoning capability, failing to detect the underlying logical inconsistencies.

\subsubsection{Logical vs. lexical MRs}
Figure~\ref{fig:motivationExample2} demonstrates how lexical MRs can induce semantic drift. First, we establish a baseline where the model correctly deduces the conclusion (\emph{``I own a car.''}) from a specific premise (\emph{``My car has four wheels.''}), returning an accurate judgment of \emph{``Yes''}. Then, the premise is modified by replacing the target noun \emph{``car''} with its broader hypernym, \emph{``vehicle,''} while keeping the conclusion unchanged.

\begin{figure}[pos=htbp]
    \centering
    \includegraphics[width=\columnwidth]{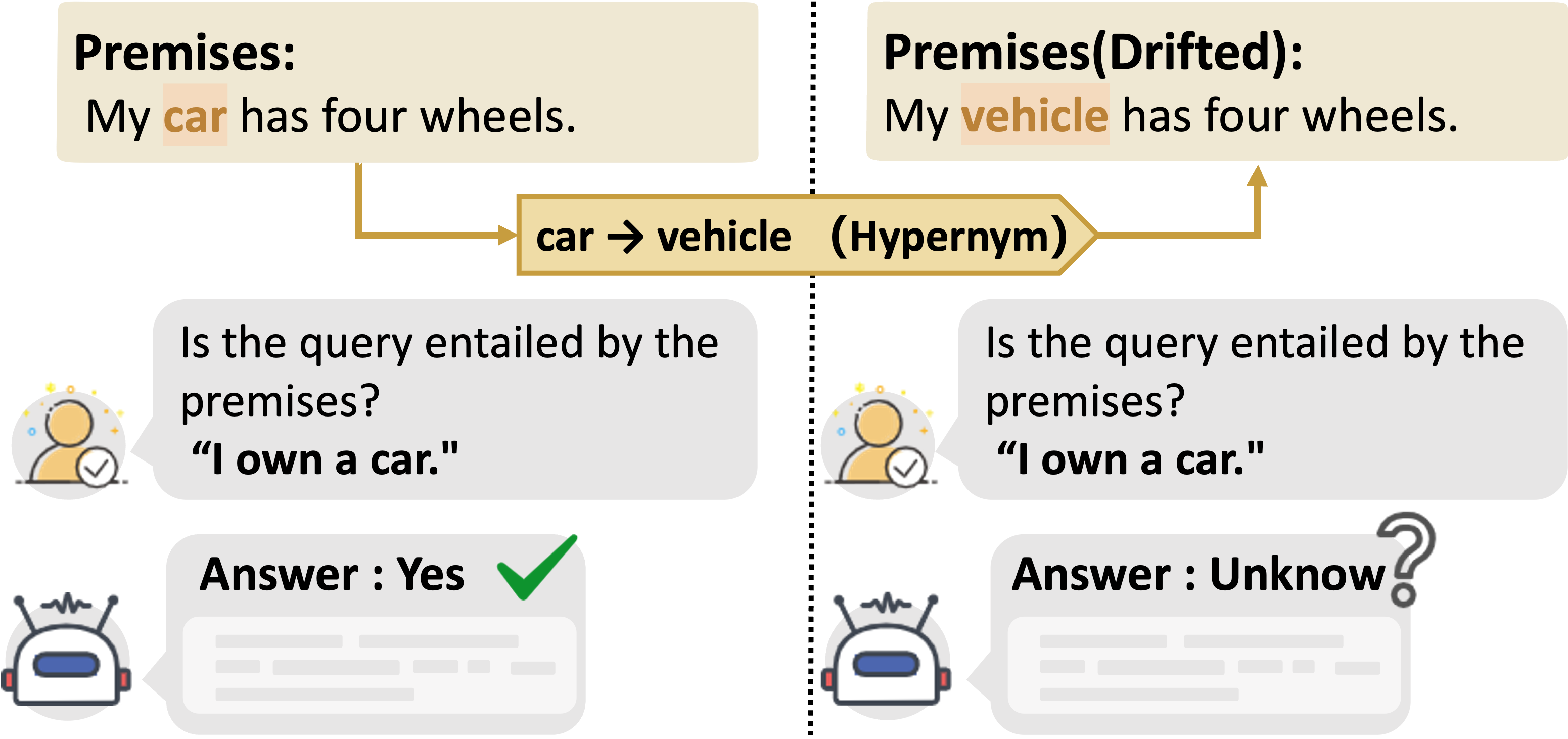}
    \caption{An illustration of semantic drift caused by traditional lexical MRs.}
    \label{fig:motivationExample2}
\end{figure}

This minor substitution induces a semantic drift that invalidates the original deductive chain. Because the category of \emph{``vehicle''} has a broader scope, a \emph{``four-wheeled vehicle''} is not necessarily a car---it could realistically be a quadricycle or an ATV. Consequently, the modified premise no longer strictly entails the conclusion. Recognizing this logical difference, the LLM correctly updates its prediction to \emph{``Unknown.''}

\revise{To avoid this type of semantic drift, logical MRs change the transformation target. Instead of directly editing words in natural language, LGMT first transforms the underlying FOL representation using equivalence-preserving laws and then translates the transformed formula into natural language. Therefore, semantic preservation is guaranteed at the symbolic level, and the remaining risk is limited to the realization step from FOL to natural language, which we audit empirically.}

\section{Methodology}
\label{sec:method}
In this section, we present \textbf{LGMT}, an automated framework designed to validate the logical reasoning reliability of LLMs. 
We first outline the testing workflow, then detail the core logic-grounded MRs, and finally describe the pipeline for constructing natural language test cases.

\subsection{Framework Overview}
\label{subsec:framework_overview}
Figure~\ref{fig:framework} illustrates the overall workflow of the proposed LGMT framework. LGMT is designed as a fully automated, oracle-free testing pipeline for evaluating the logical reasoning reliability of LLMs through logic-grounded metamorphic transformations and consistency checking.
\revise{In this workflow, LGMT follows the oracle-free nature of MT: instead of requiring a ground-truth oracle for each generated follow-up test case, it checks whether model predictions satisfy the expected metamorphic relation. Accordingly, the detection of metamorphic oracle violations does not depend on benchmark labels, although such labels are retained for comparative experimental metrics such as static accuracy, consistent accuracy, HDR, and FUR.}

\begin{figure*}
    \centering
    \includegraphics[width=\textwidth]{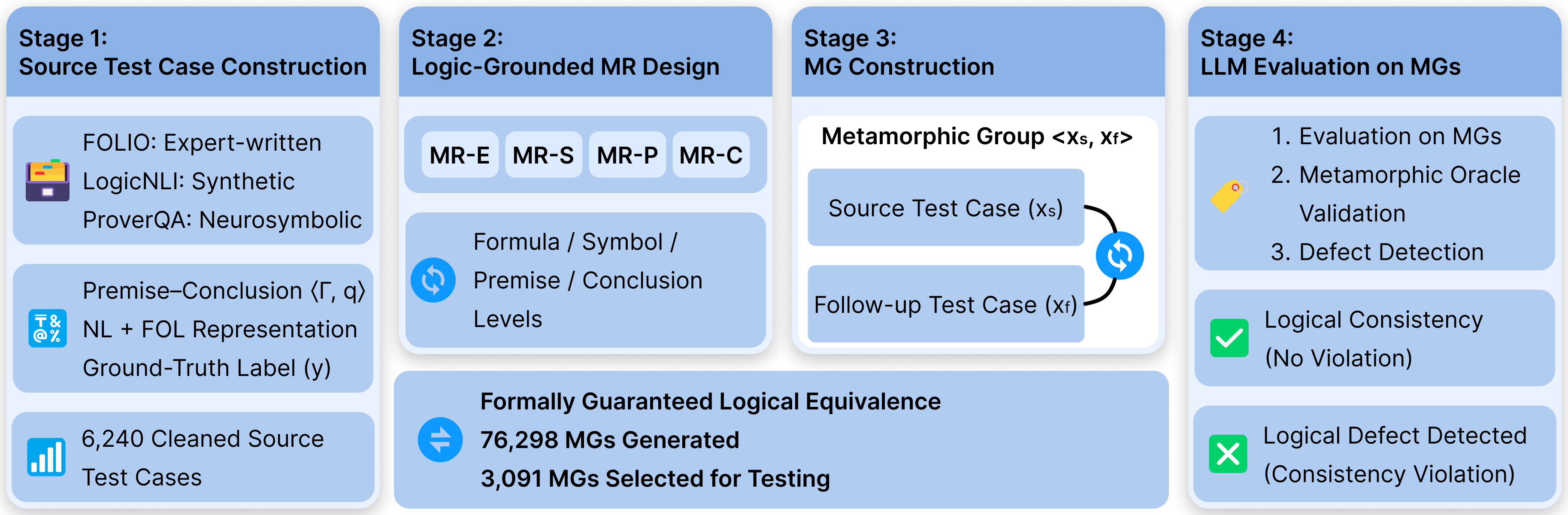}
    \caption{Overview of the LGMT Framework.}
    \label{fig:framework}
\end{figure*}

The framework is organized into four sequential stages, aligned with the methodological components described in the following subsections.

\textbf{(1) Source Test Case Construction.}
We construct source test cases from multiple FOL reasoning benchmarks (e.g., FOLIO, LogicNLI, and ProverQA). Each test case is represented in both natural language (premises and conclusion) and its corresponding FOL form, which serves as the basis for subsequent transformations.

\textbf{(2) Logic-Grounded MR Design.}
We define a set of MRs grounded in first-order logic, including formula-level (MR-E), symbol-level (MR-S), premise-level (MR-P), and conclusion-level (MR-C) transformations. These MRs are formally guaranteed to preserve logical equivalence.

\textbf{(3) Metamorphic Group Construction.}
Given a source test case, the selected MR is applied to its FOL representation to generate a transformed logical structure, which is then translated back into natural language. The source and transformed test cases together form a metamorphic group.

\textbf{(4) LLM Evaluation and Defect Detection.}
Each MG is executed on the LLM under test. The model outputs are parsed into structured predictions and evaluated using a metamorphic oracle, which checks whether the expected consistency relation is preserved. Violations indicate logical reasoning defects.

Through this pipeline, LGMT shifts evaluation from isolated correctness to consistency under logic-preserving transformations, enabling scalable and oracle-free detection of reasoning failures. For clarity, we provide an end-to-end example of this pipeline in Appendix~\ref{app:example}.

\subsection{Source Test Case Construction}
\label{subsec:stc_construction}

Each source test case for logical reasoning consists of a set of premises and a target conclusion. To support logic-grounded transformations, each source test case is represented in both natural language and its corresponding FOL form.

The construction of source test cases follows two key principles. First, the natural-language formulation must admit a faithful symbolic representation, ensuring that its logical structure can be accurately captured in FOL. Second, the symbolic representation must support the application of logic-preserving metamorphic relations, enabling systematic generation of follow-up test cases without altering the underlying semantics.

Accordingly, each source test case is normalized into a unified representation $\langle \Gamma, q, y \rangle$, where $\Gamma$ denotes a finite set of premises, $q$ is the target conclusion, and $y \in \{\text{True}, \text{False}, \text{Unknown}\}$ is the ground-truth label when available. While LGMT does not rely on ground-truth labels for defect detection, these labels are retained exclusively for quantitative evaluation and comparative analysis in the experimental study.

\subsection{Logic-Grounded MRs}
\label{subsec:mrs}
To address the semantic drift induced by traditional lexical MT, we formulate 20 types of MRs strictly grounded in first-order logic.
As summarized in Table~\ref{tab:mr_categories}, we systematically organize these MRs into four distinct categories based on their \textit{transformation granularity}: from atomic symbols up to the structure of the test case.

\revise{It is important to note that MR-E1 underlies the canonical normalization used in the PNNF completeness proof (as formally proved in Appendix~\ref{app:pnnf-completeness}). The completeness claim here is \emph{not} semantic completeness of first-order logic. Rather, it is completeness with respect to the PNNF-oriented normalization target used in this work: under the proposed rewrite procedure, any FOL formula can be rewritten into a canonical PNNF normal form using the atomic rewrite rules defined in MR-E1, and the obtained normal form is unique up to $\alpha$-equivalence. In the context of LLM evaluation, this means that MR-E1 captures a core set of equivalence-preserving transformations required for systematic logical normalization.}

\revise{We conduct the formal completeness analysis only for MR-E1 because MR-E1 is designed as the normalization-oriented core rewrite system. By contrast, MR-E2 contains additional valid equivalence laws outside this PNNF normalization basis, while MR-S, MR-P, and MR-C operate at the symbol, premise, and conclusion levels of a test case. These MRs are logic-preserving and useful for testing, but they are not intended to constitute a complete normalization procedure.}

\begin{table*}[pos=htbp, width=\textwidth]
    \centering
    \caption{Logic-Grounded MRs}
    \label{tab:mr_categories}
    \begin{tabular}{lc c l p{0.45\textwidth}}
        \toprule[1pt]
        \textbf{Category} & \textbf{Subclass} & \textbf{\# of MRs} & \textbf{Granularity} & \textbf{Transformation Strategy} \\ 
        \midrule
        \multirow{2}{*}{\textbf{MR-E}} & MR-E1 & 6 & Formula level &  Constitutes a complete basis for PNNF (Appendix~\ref{app:pnnf-completeness}). \\
        \cmidrule{2-5}
                                       & MR-E2 & 4 & Formula level & Captures other valid logical equivalence transformations outside PNNF. \\ 
        \midrule
        \textbf{MR-S}     &   --   & 2 & Symbol level & Uniformly renames     constants or predicates. \\ 
        \midrule
        \textbf{MR-P}     &   --  & 5 & Premise level & Structurally modifies $\Gamma$. \\ 
        \midrule
        \textbf{MR-C}     &   --  & 3 & Conclusion level & Structurally modifies $q$. \\ 
        \bottomrule[1pt]
    \end{tabular}
\end{table*}

As established in Section~\ref{subsec:stc_construction}, the logical skeleton of a test case is structured as a premise--conclusion pair $(\Gamma, q)$. Throughout the subsequent formalizations, we use $\phi$, $\psi$, and $\theta$ to represent arbitrary first-order formulas involved in our MRs.
An MR transforms a source test case $(\Gamma, q)$ into a follow-up test case $(\Gamma', q')$. We use the symbol $\leadsto$ to denote an application of an MR. Specifically, at the formula level, $\phi \leadsto \psi$ denotes a one-step rewrite of formulas induced by an MR; at the test-case level, $(\Gamma, q) \leadsto (\Gamma', q')$ denotes the corresponding transformation between test cases.

\subsubsection{MR-E: Logical Equivalence Transformations}
\label{subsubsec:mr_e}

This group of MRs defines a set of logically equivalent rewrite rules for FOL formulas. Each MR is formulated as a rule schema that applies to arbitrary subformulas during logical normalization. As summarized in Table~\ref{tab:mr_categories}, this category comprises two subsets: MR-E1, which forms the mathematical basis of the PNNF completeness result (Theorem~\ref{the:pnnf-complete}), and MR-E2, which captures additional valid equivalence rules not included in MR-E1.

\textbf{MR-E1: Canonical Equivalence-Based MRs.}
This subset forms the mathematical basis of the PNNF completeness result (Theorem~\ref{the:pnnf-complete}). It ensures that any formula can be reduced to a standard form:
\vspace{1ex}

\noindent\textit{MR-E1.1 (Implication and biconditional elimination).}
Implication and biconditional connectives are eliminated using the following rewrite rules:
$$ \varphi \rightarrow \psi \leadsto \neg \varphi \lor \psi $$
$$ \varphi \leftrightarrow \psi \leadsto (\neg \varphi \lor \psi) \land (\neg \psi \lor \varphi) $$

\noindent\textit{MR-E1.2 (Negation normalization).}
Negation is pushed inward until it applies only to atomic formulas through double negation, De Morgan's laws, and quantifier duality:
$$ \neg\neg\varphi \leadsto \varphi $$
$$ \neg(\varphi \land \psi) \leadsto (\neg\varphi) \lor (\neg\psi), \quad \neg(\varphi \lor \psi) \leadsto (\neg\varphi) \land (\neg\psi) $$
$$ \neg\forall x\,\varphi \leadsto \exists x\,\neg\varphi, \quad \neg\exists x\,\varphi \leadsto \forall x\,\neg\varphi $$

\noindent\textit{MR-E1.3 (Quantifier lifting).}
To move quantifiers toward the prenex prefix, let $Q \in \{\forall,\exists\}$, $\circ \in \{\wedge,\vee\}$ and $\mathrm{FV}(\psi)$ to denote the free variables in $\psi$. Provided $x \notin \mathrm{FV}(\psi)$, the following lifting rules apply:
$$ (Qx\,\varphi)\circ\psi \leadsto Qx\,(\varphi\circ\psi) $$
$$ \psi\circ(Qx\,\varphi) \leadsto Qx\,(\psi\circ\varphi) $$

\noindent\textit{MR-E1.4 (Structural normalization).}
Conjunctions and disjunctions are flattened and reordered according to a fixed canonical ordering $\prec$ (e.g., lexicographic order):
$$ (\varphi \land (\psi \land \theta)) \leadsto (\varphi \land \psi \land \theta) $$
$$ (\varphi \lor (\psi \lor \theta)) \leadsto (\varphi \lor \psi \lor \theta) $$
$$ (\varphi_1 \land \cdots \land \varphi_n) \leadsto (\varphi_{\sigma(1)} \land \cdots \land \varphi_{\sigma(n)}) $$
where $\sigma$ is the permutation that sorts the sequence by $\prec$.
\vspace{1ex}

\noindent\textit{MR-E1.5 ($\alpha$-renaming).}
To resolve variable name conflicts that hinder quantifier lifting, we perform $\alpha$-conversion. Let $\mathrm{Var}(\varphi)$ denote the set of all variables in $\varphi$:
$$ (Qx\,\varphi)\circ\psi \leadsto (Qy\,\varphi[x:=y])\circ\psi $$
where $x \in \mathrm{Var}(\psi)$ and $y \notin \mathrm{Var}(\varphi) \cup \mathrm{Var}(\psi)$.

\noindent\textit{MR-E1.6 (Canonical quantifier ordering).}
Within a prenex prefix, adjacent identical quantifiers (blocks) are reordered according to the first occurrence of their variables in the matrix $M$:
$$ Qx_1 \cdots Qx_r . M \leadsto Qx_{\sigma(1)} \cdots Qx_{\sigma(r)} . M $$
where $\sigma$ ensures a canonical variable sequence within each quantifier block.

\textbf{MR-E2: Non-Canonical Equivalence-Based MRs.}
While MR-E1 covers the complete set for canonical normalization, MR-E2 captures additional valid equivalence laws that further challenge the model's ability to simplify or expand logical expressions.

\vspace{1ex}
\noindent\textit{MR-E2.1 (Redundancy elimination).}
This type of MR leverages idempotence and absorption laws to eliminate duplicated or subsumed subformulas:
$$ \varphi \lor \varphi \leadsto \varphi, \quad \varphi \land \varphi \leadsto \varphi $$
$$ \varphi \lor (\varphi \land \psi) \leadsto \varphi, \quad \varphi \land (\varphi \lor \psi) \leadsto \varphi $$

\noindent\textit{MR-E2.2 (Tautology and contradiction elimination).}
This type of MR eliminates trivial formulas using the laws of excluded middle and contradiction:
$$ \varphi \lor \neg \varphi \leadsto \text{True},  \quad  \varphi \land \neg \varphi \leadsto \text{False}$$

\noindent\textit{MR-E2.3 (Identity and domination laws).}
This type of MR evaluates the model's capacity to prune logically inert or dominant components:
$$ \varphi \land \text{True} \leadsto \varphi, \quad \varphi \lor \text{False} \leadsto \varphi $$
$$ \varphi \lor \text{True} \leadsto \text{True}, \quad \varphi \land \text{False} \leadsto \text{False} $$

\noindent\textit{MR-E2.4 (Distributivity law).}
This MR rewrites formulas by redistributing conjunctions over disjunctions (and vice versa):
$$ \varphi \land (\psi \lor \theta) \leadsto (\varphi \land \psi) \lor (\varphi \land \theta) $$
$$ \varphi \lor (\psi \land \theta) \leadsto (\varphi \lor \psi) \land (\varphi \lor \theta) $$

\subsubsection{MR-S: Symbol Renaming Invariance}
This group of MRs captures invariance under systematic renaming of non-logical symbols, including constant symbols and predicate symbols.
Such renaming operations preserve the logical form of formulas and the entailment relations between premises and conclusions, provided that the renaming is applied uniformly.
Therefore, a logically sound reasoner is expected to produce identical entailment judgments before and after symbol renaming.
We use the notation $(\Gamma, q)[s := s']$ to denote the test case obtained by uniformly replacing every occurrence of the symbol $s$ in both $\Gamma$ and $q$ with the symbol $s'$.

\noindent\textit{MR-S1 (Constant Renaming Invariance).}
This type of MR captures invariance under uniform renaming of constant symbols, which correspond to object identifiers in the domain.
A constant symbol is consistently replaced by a fresh constant symbol throughout the test case, while all other symbols remain unchanged.
At the test-case level, the MR is defined as:

$$(\Gamma, q) \leadsto (\Gamma, q)[c := c']$$
where $c$ is a constant symbol occurring in $(\Gamma, q)$ and $c'$ is a fresh constant symbol.

\noindent\textit{MR-S2 (Predicate Renaming Invariance).}
This type of MR captures invariance under uniform renaming of predicate symbols, which label relations over domain objects.
A predicate symbol is consistently replaced by a fresh predicate symbol throughout the test case, while all other symbols remain unchanged.
At the test-case level, the MR is defined as:
$$ (\Gamma, q) \leadsto (\Gamma, q)[P := P'] $$
where $P$ is a predicate symbol occurring in $(\Gamma, q)$ and $P'$ is a fresh predicate symbol.

\subsubsection{MR-P: Premise Transformation}
This group of MRs captures the invariance of entailment judgments under premise transformations which are inspired by Chen et al.~\cite{chen2021testing}. At the test-case level, these MRs are defined as:
$$ (\Gamma, q) \leadsto (\Gamma', q) $$
where $\Gamma'$ is obtained from $\Gamma$ by applying the corresponding premise transformation.

\noindent\textit{MR-P1 (Premise Reordering).}
The order of premises in $\Gamma$ is permuted.
This MR changes only the presentation order of premises and does not affect their logical content.

\noindent\textit{MR-P2 (Premise Duplication).}
A premise already occurring in $\Gamma$ is duplicated and added again to the premise set.
This introduces redundancy without adding new information.

\noindent\textit{MR-P3 (Irrelevant Extension).}
A logically irrelevant but non-contradictory premise is added to the premise set.
Such a premise does not affect the entailment relation between $\Gamma$ and $q$.

\noindent\textit{MR-P4 (Premise Fusion).}
Two premises $\varphi, \psi \in \Gamma$ are combined into a conjunctive premise $\varphi \land \psi$, which is added to the premise set.
This transformation preserves information while altering the structural form of the premises.

\noindent\textit{MR-P5 (Premise Decomposition).}
A conjunctive premise $\varphi \land \psi \in \Gamma$ is decomposed into two separate premises $\varphi$ and $\psi$, which are added to the premise set.

\subsubsection{MR-C: Conclusion Transformation}
This group of MRs captures invariance of entailment judgments under structural transformations of the conclusion.
At the test-case level, this group of MRs is defined as:
$$ (\Gamma, q) \leadsto (\Gamma, q') $$
where $q'$ is obtained from $q$ by applying the corresponding conclusion transformation.

\noindent\textit{MR-C1 (Conjunction with Truth).}
The conclusion $q$ is transformed into $q \land \textrm{True}$.

\noindent\textit{MR-C2 (Disjunction with Falsehood).}
The conclusion $q$ is transformed into $q \lor \textrm{False}$.

\noindent\textit{MR-C3 (Double Negation).}
The conclusion $q$ is transformed into $\neg\neg q$.

\revise{To make these MRs easier to apply in practice, Section~\ref{subsec:mr_application_example} provides a compact example showing how the four categories of MRs are applied in LLM testing.}

\subsection{Metamorphic Group Construction}
\label{subsec:mg_construction}

Building upon the MRs defined in Section~\ref{subsec:mrs}, we apply these logical transformations to each source test case to generate its corresponding follow-up test case ($x_s \leadsto x_f$). As established in Section~\ref{subsec:stc_construction}, our framework maintains a dual-representation system. Under this design, the application of an MR mutates the underlying FOL skeleton first (e.g., $\phi \leadsto \psi$, where $\phi \in \Gamma$). 

To translate this mutated FOL formula back into natural language, we employ an LLM-based translation module. Crucially, the LLM is constrained via prompting to operate strictly as a parser and translator rather than a reasoner. This constrains the translation module to preserve the intended logical structure as much as possible, while avoiding additional reasoning or simplification. Finally, the source test case and its generated follow-up test case are coupled to form a complete metamorphic group for evaluation, formally defined as the tuple $\langle x_s, x_f \rangle$.


\subsection{Metamorphic Oracle and Defect Detection}

The final module of the LGMT framework is automated test result verification. Given an MG $\langle x_s, x_f \rangle$, we systematically evaluate the reasoning robustness of the LLM under test, denoted as $f(\cdot)$. We feed $x_s$ and $x_f$ independently into the model to obtain their respective raw textual outputs, $f(x_s)$ and $f(x_f)$.

Because LLM outputs are generative, we parse these raw responses to extract discrete logical predictions (e.g., True, False, or Unknown), denoted as $\hat{y}_s$ and $\hat{y}_f$.
The metamorphic oracle then acts as an automated, black-box verification mechanism by checking whether the relationship between $\hat{y}_s$ and $\hat{y}_f$ satisfies the expected output relation ($\mathcal{R}_o$) dictated by the applied MR.

Any deviation from the expected $\mathcal{R}_o$ constitutes a \textit{metamorphic oracle violation}. Because our FOL-grounded MRs strictly preserve the underlying logic of source test cases, each violation indicates a potential reasoning inconsistency, after excluding translation and parsing artifacts.

\subsection{Illustrative MR Applications}
\label{subsec:mr_application_example}

\revise{Following the MR categorization by transformation granularity in Section~\ref{subsec:mrs} and Table~\ref{tab:mr_categories}, we provide an example to bridge the formal MR definitions and their practical use in LLM testing. To keep the discussion connected to the motivating example in Figure~\ref{fig:motivationExample1}, consider the following simplified source test case:
\begin{quote}
\textbf{Premises:} (1) All social media applications have chat features or video features. (2) TikTok is a social media application.\\
\textbf{Conclusion:} TikTok has chat features or video features.
\end{quote}
Its FOL representation is $\Gamma=[\forall x(S(x)\rightarrow (C(x)\lor V(x))), S(t)]$ and $q=C(t)\lor V(t)$, where $S$ denotes \emph{social media application}, $C$ denotes \emph{has chat features}, $V$ denotes \emph{has video features}, and $t$ denotes TikTok. The expected judgment is \textit{True}. LGMT does not need this label to detect a metamorphic violation, and the label is shown here only to make the example intuitive.}

\revise{\textbf{Formula-level application.} At the formula level, LGMT applies an equivalence rule to a formula in the test case. For example, MR-E1.1 rewrites the first premise by implication elimination:
\[
\forall x(S(x)\rightarrow (C(x)\lor V(x)))
\leadsto
\forall x(\neg S(x)\lor C(x)\lor V(x)).
\]
The corresponding follow-up case uses the premise \emph{``For every application, either it is not a social media application, or it has chat features or video features''}, while keeping the second premise and the conclusion unchanged. Because the transformed premise is logically equivalent to the original one, a reliable reasoner should return the same label for the source and follow-up cases.}

\revise{\textbf{Symbol-level application.} At the symbol level, LGMT uniformly renames non-logical symbols while preserving the inference structure. For example, MR-S can replace $S$, $C$, and $V$ with fresh predicates $P$, $Q$, and $R$, and replace the constant $t$ with $a$, throughout both $\Gamma$ and $q$. A follow-up case can therefore be realized as: \emph{``All objects with property P have property Q or property R. a has property P. Conclusion: a has property Q or property R.''} The expected judgment must remain unchanged.}

\revise{\textbf{Premise-level application.} At the premise level, LGMT modifies the organization of the premise set without changing its logical content. For example, MR-P1 can reorder the two premises, and MR-P2 can duplicate the premise \emph{``TikTok is a social media application.''} In both cases, the available information is unchanged, so the model should preserve its original judgment.}

\revise{\textbf{Conclusion-level application.} At the conclusion level, LGMT applies an equivalence-preserving transformation to $q$. For example, MR-C2 rewrites the conclusion as $(C(t)\lor V(t))\lor \mathrm{False}$, which can be realized as \emph{``TikTok has chat features or video features, or five times five equals one.''} The follow-up conclusion is logically equivalent to the original conclusion, so the model's entailment judgment should remain the same.}

\revise{This example illustrates the role of the four MR categories in LGMT. Formula-level MRs alter logical expressions, symbol-level MRs abstract away lexical cues, premise-level MRs modify the structure of the premise set, and conclusion-level MRs reformulate the query target. In all cases, the expected output relation is consistency between the source and follow-up predictions.}

\section{Experimental Setup}
\label{sec:experimental_setup}
In this section, we present the experimental design for evaluating the effectiveness of our LGMT framework. We first outline the core research questions guiding our empirical study, then detail the selection of models under test, the test datasets and sampling strategy, and finally describe the evaluation metrics and analysis methods.

\subsection{Research Questions}
\label{subsec:rqs}
To systematically and rigorously evaluate our framework, we organize our empirical study around four core research questions, inspired by recent state-of-the-art evaluation paradigms for MT for LLMs~\cite{cho2025metamorphic}:

\begin{itemize}
    \item \textbf{RQ1 (Effectiveness):} How effectively can LGMT detect logical reasoning defects in LLMs?
    
    \item \textbf{RQ2 (Comparison):} How does LGMT compare with traditional reference-based testing?
    
    \item \textbf{RQ3 (Sensitivity):} How do different categories of logic-grounded MRs impact the inconsistency detection rate?
    
    \item \textbf{RQ4 (Mitigability):} Can advanced prompting strategies, such as Few-shot and Few-shot CoT prompting, mitigate the logical defects exposed by LGMT?
\end{itemize}

These RQs are designed to comprehensively validate LGMT from both testing and evaluation perspectives. \textbf{RQ1} assesses the overall fault-detection effectiveness of LGMT (i.e., metamorphic oracle violation rate). \textbf{RQ2} compares our metamorphic oracle against traditional reference-based testing methods to expose hidden defects that successfully evade traditional reference-based evaluations. \textbf{RQ3} provides a sensitivity analysis across different categories of our logic-grounded MRs. Finally, \textbf{RQ4} investigates whether advanced prompting strategies (e.g., \textit{Zero-shot}, \textit{Zero-shot CoT}, \textit{Few-shot}, and \textit{Few-shot CoT}) can effectively mitigate the exposed logical defects.

\subsection{Models Under Test}
\label{subsec:suts}

To systematically evaluate the generalizability of our LGMT framework, we carefully selected six state-of-the-art LLMs. As shown in Table~\ref{tab:suts}, these models are categorized into three distinct categories: \textit{Proprietary Models}, \textit{Open-Weights Models}, and \textit{Reasoning Models} (which are specifically optimized for complex reasoning tasks). Proprietary and reasoning models were accessed via their official API providers, whereas standard open-weights models were hosted on high-performance inference platforms (e.g., Fireworks AI).

To ensure a fair and deterministic evaluation across these diverse hosting environments, we strictly set \texttt{temperature=0} (greedy decoding) for all models. This eliminates the inherent stochasticity of LLM generation, ensuring that the detected metamorphic oracle violations reflect structural reasoning defects rather than random sampling variations. 
The only exception is for reasoning models, where the temperature parameter is inherently restricted by the official API design.

\begin{table}[pos=htbp]
    \centering
    \caption{Summary of the Evaluated Models}
    \label{tab:suts}
    \begin{tabular}{p{0.2\columnwidth} p{0.32\columnwidth} p{0.3\columnwidth}}
        \toprule
        \textbf{Category} & \textbf{Model Name} & \textbf{API Identifier} \\
            
        \multirow{2}{*}{Proprietary} 
        & GPT-5.2 & \texttt{gpt-5.2} \\
        & Claude 4.5 Sonnet & \texttt{claude-4-5-sonnet} \\
        \midrule
        \multirow{2}{*}{Open-Weights} 
        & Llama 3.3 (70B) & \texttt{Llama-3.3-70B-Instruct} \\
        & DeepSeek Chat & \texttt{deepseek-chat} \\
        \midrule
        \multirow{2}{*}{Reasoning} 
        & OpenAI o4-mini & \texttt{o4-mini} \\
        & DeepSeek Reasoner & \texttt{deepseek-reasoner} \\
        \bottomrule
    \end{tabular}
\end{table}

\subsection{Test Datasets and Sampling Strategy} 
\label{subsec:test_datasets} 

To instantiate the source test case construction process described in Section~\ref{subsec:stc_construction}, we derive source test cases from three diverse FOL reasoning benchmarks: FOLIO~\cite{han2024folio}, LogicNLI~\cite{tian2021diagnosing}, and ProverQA~\cite{qi2024large}. To ensure the quality of our test cases, we applied rigorous data cleaning and filtration criteria tailored to each dataset. Specifically, for FOLIO, we extracted 995 instances and filtered out those with unparseable logical expressions. For LogicNLI, we developed a customized logical solver to identify and eliminate instances with inherently contradictory premises, ultimately yielding 4,245 high-quality source test cases. For ProverQA, we utilized the complete set of 1,000 instances. By applying our 20 types of logic-grounded MRs to these source test cases, our LGMT framework automatically generated \revise{a large candidate pool} of 76,298 MGs. 

However, executing all 76,298 MGs across multiple LLMs is computationally prohibitive. To balance computational feasibility with strict statistical guarantees, we employed a sampling strategy. 
According to Cochran's formula~\cite{cochran1968errors}, a sample size of $N \ge 385$ ensures a margin of error within $5\%$ at a $95\%$ confidence level. To satisfy this threshold for every major analytical category (MR-E, MR-S, MR-P, and MR-C) while ensuring fine-grained diversity, we randomly sampled up to 200 instances per sub-rule (e.g., MR-S1, MR-S2). Because each major category comprises at least two sub-rules, this strategy naturally aggregates to a minimum of 400 instances per category (e.g., MR-S yields $2 \times 200 = 400$). Finally, our evaluation test dataset constitutes a total of 3,091 MGs. \revise{The distribution of sampled MGs across the four MR categories is shown in Table~\ref{tab:sample_distribution}.}

\begingroup
\begin{table}[pos=htbp]
\centering
\caption{Distribution of sampled MGs across MR categories}
\label{tab:sample_distribution}
\begin{tabular}{lcc}
\toprule
\textbf{MR Category} & \textbf{Sampled MGs} & \textbf{Percentage} \\
\midrule
MR-E & 1,091 & 35.30\% \\
MR-S & 400 & 12.94\% \\
MR-P & 1,000 & 32.35\% \\
MR-C & 600 & 19.41\% \\
\midrule
Total & 3,091 & 100.00\% \\
\bottomrule
\end{tabular}
\end{table}
\endgroup

It is important to note that due to strict structural applicability constraints, certain MRs yielded fewer than the 200-instance maximum. Nevertheless, we manually confirmed that the aggregated sample size for every major category safely remains well above the 385 threshold (specifically, the smallest category comprises exactly 400 cases).

\subsection{Prompting Strategies} 
When evaluating LLMs, a common concern is that test failures might simply result from inadequate prompting rather than genuine reasoning defects. To address this (RQ4), we evaluate the models under four established paradigms: Zero-shot, Zero-shot CoT, Few-shot, and Few-shot CoT. While \textit{Zero-shot CoT} serves as our default setting (RQ1--RQ3), contrasting it with the other paradigms allows us to investigate whether Few-shot and CoT prompting strategies can effectively mitigate these logical defects. 

\subsection{Evaluation Metrics} \label{subsec:metrics}
To systematically address our Research Questions, we define three categories of evaluation metrics, precisely aligned with our testing objectives and comparative paradigms.

\subsubsection{Metrics for Fault Detection Effectiveness (RQ1, RQ3, RQ4)}
To evaluate the absolute capability of LGMT in exposing logical reasoning defects without relying on ground-truth labels, we employ
\textit{Metamorphic Oracle Violation Rate (MVR)}, which is defined as the proportion of metamorphic groups where the LLM's outputs violate the corresponding metamorphic relation~\cite{cho2025metamorphic}. MVR directly quantifies the inconsistency detection rate of LGMT in exposing logical reasoning defects.

\subsubsection{Metrics for LGMT Reliability and Validity (RQ1)}
As established in Section \ref{sec:method}, the theoretical soundness of our testing framework is guaranteed by our FOL-grounded approach. To empirically validate this, we define \textit{False Report Rate (FRR)} as the proportion of reported metamorphic oracle violations that are actually false positives. To quantify this, we categorize false positives into two types based on their root causes.

The first type corresponds to violations caused by \textit{semantic drift}, where the natural-language realization of the follow-up test case deviates from its underlying FOL representation, resulting in a mismatch between the intended logical transformation and the actual input semantics.

The second type corresponds to violations caused by \textit{parsing errors}, where the LLM's raw output is incorrectly mapped to a discrete logical label. In these cases, the inconsistency arises from output misinterpretation rather than the model's reasoning behavior.
\revise{Specifically, all evaluation prompts require a strict JSON output with a \texttt{label} field. If the output cannot be parsed into one of the three valid labels, or if the parsed label is ambiguous, the case is logged with its raw output and MG identifier.}

Evaluated through rigorous manual inspection on a random sample of $N$ reported violations, FRR is calculated as:
$$FRR = \frac{FP_{drift} + FP_{parse}}{N}$$
where $FP_{drift}$ and $FP_{parse}$ denote the numbers of false positives caused by semantic drift and parsing errors, respectively, and $N$ is the total number of inspected reported violations.

\subsubsection{Metrics for Comparison and Analysis (RQ2, RQ4)}
To benchmark LGMT against traditional reference-based testing, we utilize ground-truth labels to define the following metrics:
\begin{itemize}
    \item \textbf{Static Accuracy ($Acc_{static}$):} The standard accuracy of the models on the source test cases, evaluated strictly against ground-truth labels. This metric captures the model's apparent reasoning capability under the traditional static reference-based testing.
    
    \item \textbf{Consistent Accuracy ($Acc_{cons}$):} The proportion of metamorphic test groups where the models correctly answer both the source test case and the follow-up test cases. This metric reduces the impact of coincidental correctness by requiring consistent predictions across logically equivalent test cases.
     
    \item \textbf{Hidden Defect Rate (HDR):} HDR is defined as the proportion of test cases that yield correct predictions under traditional static evaluation ($Acc_{static}$) but trigger a metamorphic oracle violation. This metric quantifies the performance degradation from $Acc_{static}$ to $Acc_{cons}$, reflecting the extent to which static benchmarks overestimate model capabilities due to hidden defects that evade traditional evaluation but are exposed by LGMT.
\end{itemize}

\subsubsection{Metrics for Undetected Logical Errors (RQ4)}

While LGMT detects logical inconsistencies through metamorphic oracle violations, it cannot capture cases where the model produces identical but incorrect predictions across both the source and follow-up test cases.
To quantify this limitation, we define the \textit{False Unreported Rate (FUR)} as the proportion of metamorphic groups where both answers are incorrect but no metamorphic oracle violation is triggered.
FUR characterizes a fundamental blind spot of consistency-based testing, where models may exhibit consistent but incorrect predictions. We note that FUR is not equivalent to the classical false negative rate, as it is defined with respect to benchmark ground-truth labels and only captures a subset of undetected errors.

\section{Experimental Results and Analysis}
\label{sec:results}
In this section, we present the empirical results and analysis of our LGMT framework to answer the four research questions formulated in Section~\ref{subsec:rqs}.

\subsection{Answer to RQ1}
\label{subsec:answer_rq1}
\setrq{1}

RQ1 investigates whether LGMT can effectively expose logical reasoning defects in LLMs while maintaining a low false report rate. To answer this question, we evaluate LGMT from two complementary perspectives: \textit{effectiveness}, measured by the Metamorphic Oracle Violation Rate (MVR), and \textit{reliability}, measured by the False Report Rate (FRR).

As shown in Table~\ref{tab:rq1}, LGMT detects a substantial number of metamorphic oracle violations across all evaluated models, with MVR ranging from 20.84\% to 29.44\%. DeepSeek Reasoner exhibits the highest violation rate (29.44\%), followed by GPT-5.2 (26.76\%), while Claude 4.5 Sonnet shows the lowest MVR (20.84\%).

\Finding{All evaluated models exhibit substantial logical reasoning defects, and the observed violation rates do not consistently correlate with model performance or reasoning-oriented design.}
\Implication{Although Claude 4.5 Sonnet achieves the lowest MVR (20.84\%), its violation rate remains non-trivial. GPT-5.2 still exhibits a relatively high MVR of 26.76\%, while the two reasoning-oriented models show markedly different robustness: o4-mini performs comparatively well (22.06\%), but DeepSeek Reasoner records the highest MVR overall (29.44\%). These results suggest that the observed violation rates do not consistently track overall model performance or reasoning-oriented design.}

\begin{table}[pos=htbp, width=0.9\textwidth]
\centering
\caption{Overall Effectiveness and Reliability of LGMT}
\label{tab:rq1}
\begin{tabular}{lcccccc}
\toprule
Model & MVR & \# Violations\\
\midrule
GPT-5.2 & 26.76\% & 827\\
Claude 4.5 Sonnet & 20.84\% & 644 \\
Llama 3.3 & 25.43\% & 786 \\
DeepSeek Chat & 24.72\% & 764 \\
o4-mini & 22.06\% & 682 \\
DeepSeek Reasoner & 29.44\% & 910 \\
\bottomrule
\end{tabular}
\end{table}

To assess whether these reported violations correspond to genuine reasoning defects, we further audited a subset of cases using an LLM-assisted workflow with human verification. Instead of sampling per model, we uniformly sampled 360 reported violations from all 4,613 MG violations, which satisfies Cochran's sampling criterion~\cite{cochran1968errors}. In the first stage, a strong LLM (GPT-5.4) was used as a preliminary auditor to screen whether each generated follow-up test case faithfully preserved the intended logical meaning of its transformed FOL formula. Among the 360 sampled cases, 54 were flagged as potentially problematic. In the second stage, all flagged cases were manually verified.

\Finding{Only a small fraction of reported violations are attributable to semantic drift or parsing errors, indicating that LGMT reports are highly reliable.}
\Implication{Manual inspection confirms that only 4 cases exhibit genuine semantic drift ($FP_{drift}$), and these are mainly caused by inaccuracies in the FOL-to-natural-language translation process rather than flaws in the underlying logic-grounded transformations. Across all audited cases, only one parser-induced false positive was observed. As summarized in Table~\ref{tab:rq1_reliability}, this corresponds to an empirical FRR of 1.38\%, indicating that the vast majority of reported violations reflect genuine logical reasoning defects rather than artifacts of test generation.} 

\begin{table}[pos=htbp]
\centering
\caption{Reliability of LGMT based on audited violations}
\label{tab:rq1_reliability}
\begin{tabular}{ccccc}
\toprule
Sample Size & $FP_{drift}$ & $FP_{parse}$ & FRR \\
\midrule
360  & 4 & 1 & 1.38\% \\
\bottomrule
\end{tabular}
\end{table}

Overall, these results confirm that LGMT is both effective and reliable: it reveals substantial logical reasoning defects across all evaluated models, while the low empirical FRR indicates that these reported violations overwhelmingly correspond to genuine defects rather than artifacts of test generation.


\subsection{Answer to RQ2}
\label{subsec:answer_rq2}
\setrq{2}

For RQ2, we investigate whether traditional reference-based testing overestimates the logical reasoning capabilities of LLMs by comparing it with LGMT from both apparent correctness and consistency-preserved correctness. Specifically, $Acc_{static}$ measures model performance under traditional reference-based evaluation, whereas $Acc_{cons}$ measures whether such correctness can be preserved under logic-preserving metamorphic transformations. The overall comparison, together with the Hidden Defect Rate (HDR), is summarized in Table~\ref{tab:rq2}.

\begin{table}[pos=htbp]
\centering
\caption{Comparison of Traditional Reference-Based Testing and LGMT}\label{tab:rq2}
\begin{tabular}{lccc}
\toprule
Model & Acc\_static & Acc\_cons & HDR \\
\midrule
GPT-5.2 & 73.34\% & 56.84\% & 16.50\% \\
Claude 4.5 Sonnet & 73.53\% & 60.97\% & 12.56\% \\
Llama 3.3 & 65.38\% & 52.38\% & 13.01\% \\
DeepSeek Chat & 69.91\% & 55.65\% & 14.27\% \\
o4-mini & 69.36\% & 56.20\% & 13.17\% \\
DeepSeek Reasoner & 72.44\% & 56.71\% & 15.72\% \\
\bottomrule
\end{tabular}
\end{table}

\Finding{Under traditional reference-based testing, all evaluated models achieve relatively high static accuracy (65.38\%–73.53\%), suggesting strong reasoning capability.}
\Implication{Static benchmark evaluation presents an overall optimistic view of LLM logical reasoning ability and suggests a model ranking dominated by proprietary models.}

\Finding{Under LGMT, all models exhibit a clear drop from $Acc_{static}$ to $Acc_{cons}$, with consistent accuracy decreasing to 52.38\%--60.97\%; for example, GPT-5.2 drops from 73.34\% to 56.84\%, Claude 4.5 Sonnet from 73.53\% to 60.97\%, and DeepSeek Reasoner from 72.44\% to 56.71\%.}
\Implication{Reference-based testing captures only isolated correctness on source test cases, whereas LGMT reveals that such correctness is often not logically stable under logic-preserving metamorphic transformations. This indicates that many seemingly correct predictions are achieved through coincidental correctness rather than robust logical reasoning.}

\Finding{LGMT further differentiates models in terms of hidden defects and consistency, with Claude 4.5 Sonnet achieving the highest $Acc_{cons}$ (60.97\%) and the lowest HDR (12.56\%), Llama 3.3 showing the weakest consistent performance (52.38\%), and GPT-5.2 exhibiting the highest HDR (16.50\%) despite its strong static accuracy.}
\Implication{LGMT not only lowers performance estimates, but also reshapes comparative judgments of model reliability beyond what static accuracy alone can reveal.}

Taken together, these results demonstrate that traditional reference-based testing systematically overestimates the reasoning reliability of LLMs by counting isolated correct answers as evidence of reasoning ability. In contrast, LGMT evaluates whether such correctness can survive logic-preserving transformations, revealing that a substantial portion of these correct answers are in fact logically unstable.

\subsection{Answer to RQ3}
\label{subsec:answer_rq3}
\setrq{3}

RQ3 investigates how different categories of logic-grounded MRs affect the detection rate of LGMT. 
Specifically, we analyze the Metamorphic Oracle Violation Rate (MVR) across four MR categories (MR-E, MR-S, MR-P, and MR-C) to understand which types of MRs are most challenging for LLMs. 

\begin{figure}[pos=htbp]
    \centering
    \includegraphics[width=\columnwidth]{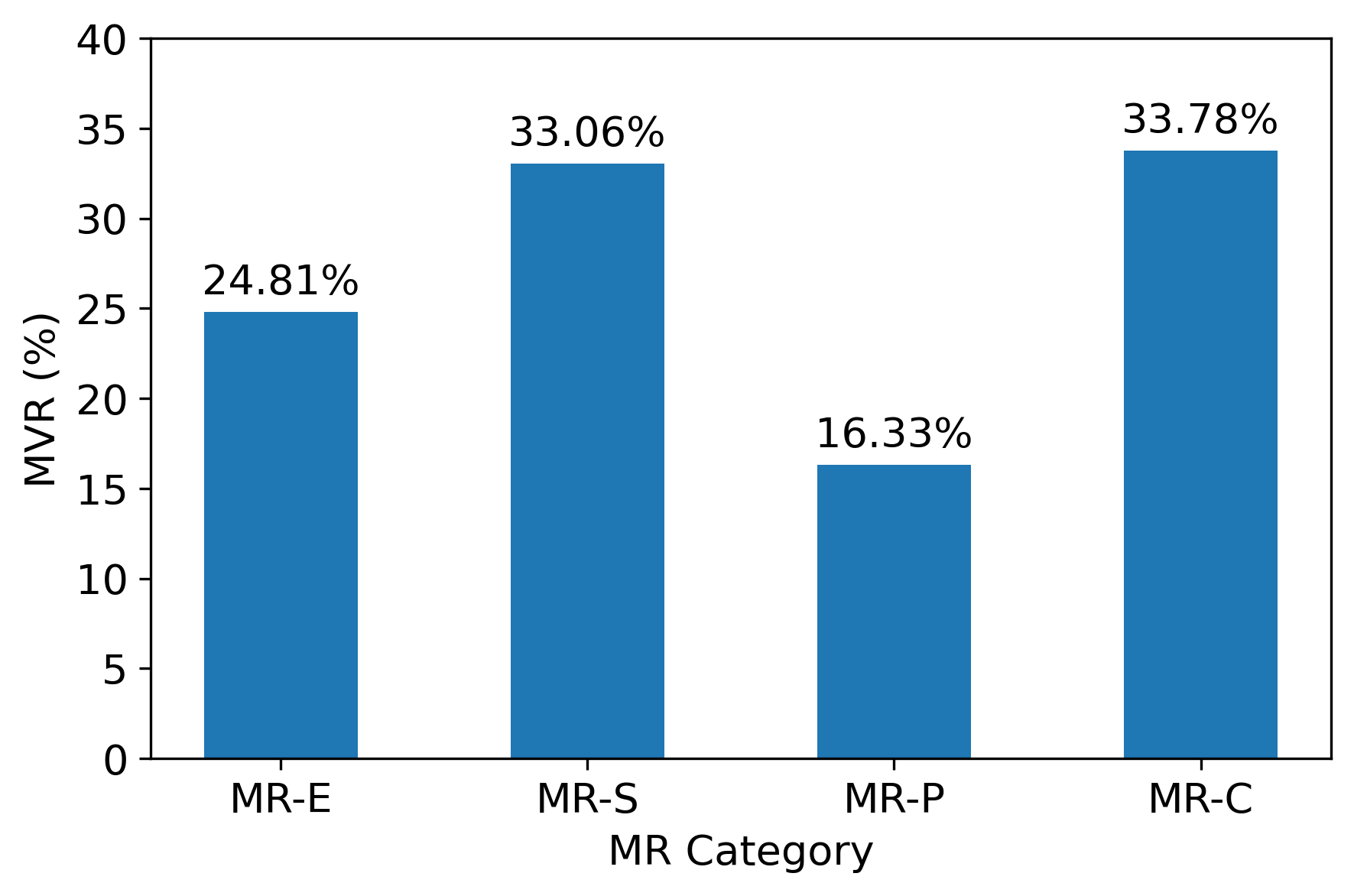}
    \caption{Average MVR across MR categories. MR-C and MR-S induce the highest inconsistency, while MR-P shows substantially lower sensitivity.}
    \label{fig:mvr_by_category}
\end{figure}

Figure~\ref{fig:mvr_by_category} presents the average MVR across MR categories, while Table~\ref{tab:rq3} provides a detailed breakdown across models.
For completeness, we further report fine-grained MVR results at the individual MR level in Appendix~\ref{app:rq3-additional}, which corroborate the category-level trends observed here.

As shown in Figure~\ref{fig:mvr_by_category}, the average MVR varies substantially across MR categories. 
MR-C and MR-S induce the highest inconsistency (33.78\% and 33.06\%, respectively), while MR-P results in the lowest MVR (16.33\%). 
MR-E exhibits a moderate level of inconsistency (24.81\%), indicating a clear separation between different types of logical transformations.

\begin{table*}[pos=tbp]
\centering
\caption{Sensitivity of LGMT across MR categories. Each cell reports the MVR. The Avg. MVR column is computed by aggregating violations across all models.}\label{tab:rq3}
\begin{tabular}{lccccccc}
\toprule
Category & Avg. MVR & GPT-5.2 & Claude 4.5 Sonnet & Llama 3.3 & DeepSeek Chat & o4-mini & DeepSeek Reasoner \\
\midrule
MR-E & 24.81\% & 25.02\% & 21.36\% & 25.02\% & 24.93\% & 24.66\% & 27.86\% \\
MR-S & 33.06\% & 32.00\% & 31.83\% & 35.25\% & 35.00\% & 27.75\% & 36.50\% \\
MR-P & 16.33\% & 19.10\% & 10.60\% & 16.10\% & 15.90\% & 14.40\% & 21.90\% \\
MR-C & 33.78\% & 39.17\% & 29.67\% & 35.17\% & 32.17\% & 26.33\% & 40.17\% \\
\bottomrule
\end{tabular}
\end{table*}

This pattern is consistently observed across all evaluated models, as shown in Table~\ref{tab:rq3}. 
For every model, MR-C and MR-S remain the most challenging categories, while MR-P consistently yields the lowest violation rates. 
This indicates that the observed sensitivity is not model-specific, but reflects a shared weakness across current LLMs.

The observed sensitivity differences can be attributed to the nature of the underlying transformations. 
The high MVR under MR-C suggests that LLMs are particularly vulnerable to equivalence-preserving transformations on conclusions, indicating instability in final decision-making. 
Similarly, the high MVR under MR-S reveals limited robustness to symbol renaming, suggesting that models may rely on lexical-level representations rather than fully abstract logical structures. 
In contrast, the comparatively lower MVR under MR-P indicates that premise-level structural transformations are less disruptive. 
However, the violation rate still exceeds 10\% across all models, highlighting that LLM reasoning remains fundamentally unstable.

\revise{A representative MR-S2 failure case makes this result intuitive. In this case, the source test concerns a text summarization model trained with machine learning algorithms. The LLM correctly predicts \textit{True} for the source case, but changes its prediction to \textit{Unknown} after MR-S2 uniformly replaces the meaningful predicates with abstract symbols, even though the logical structure and expected label remain unchanged. This failure suggests that the model does not fully understand and reason over the underlying logical structure; instead, it relies on lexical cues in meaningful predicate names when producing its output. The full source and follow-up cases are provided in Appendix~\ref{app:mrs2_failure_case}.}

\Finding{LLMs exhibit significantly different sensitivity across MR categories, with MR-C and MR-S inducing the highest inconsistency, while MR-P results in the lowest violation rate.}
\Implication{This result suggests that many current LLMs have not learned reasoning procedures that are invariant to logically equivalent reformulations. Instead, their apparent correctness may still depend strongly on how a conclusion is phrased or how symbols are instantiated.}

\Finding{The sensitivity pattern across MR categories is highly consistent among all evaluated models.}
\Implication{This suggests that the observed weaknesses are not model-specific, but reflect a fundamental limitation in current LLMs when handling logic-preserving transformations.}

Overall, these results demonstrate that LGMT not only detects logical inconsistencies, but also serves as a diagnostic tool for identifying the specific reasoning dimensions in which LLMs are most vulnerable.

\subsection{Answer to RQ4}
\label{subsec:answer_rq4}
\setrq{4}

RQ4 investigates whether advanced prompting strategies can mitigate the logical defects exposed by LGMT. 
To this end, we evaluate models under four prompting settings (Zero-shot, Zero-shot CoT, Few-shot, and Few-shot CoT) and analyze their performance using both reference-based metrics ($Acc_{static}$, $Acc_{cons}$, HDR) and LGMT-based metrics (MVR and FUR), as summarized in Table~\ref{tab:rq4}. 
Figures~\ref{fig:rq4_hdr} and~\ref{fig:rq4_fur} further visualize the distribution of HDR and FUR across models and prompting strategies.

\begin{table*}[pos=htbp]
\centering
\caption{Impact of prompting strategies on model performance. Acc\_static, Acc\_cons, and HDR are comparative evaluation metrics; MVR measures the defect detection capability of LGMT; FUR captures the limitation of consistency-based testing.}\label{tab:rq4}
\begin{tabular}{llccc|c|c}
\toprule
Model & Prompt & Acc\_static & Acc\_cons & HDR & MVR & FUR \\
\midrule
GPT-5.2 & Zero-shot & 69.07\% & 53.61\% & 15.46\% & 23.07\% & 23.33\% \\
GPT-5.2 & Zero-shot CoT & 73.34\% & 56.84\% & 16.50\% & 26.76\% & 16.40\% \\
GPT-5.2 & Few-shot & 66.13\% & 50.47\% & 15.66\% & 23.65\% & 25.88\% \\
GPT-5.2 & Few-shot CoT & 75.25\% & 63.25\% & 12.00\% & 18.57\% & 18.18\% \\
\midrule
Claude 4.5 Sonnet & Zero-shot & 73.46\% & 61.59\% & 11.88\% & 20.52\% & 17.90\% \\
Claude 4.5 Sonnet & Zero-shot CoT & 73.53\% & 60.97\% & 12.56\% & 20.84\% & 18.19\% \\
Claude 4.5 Sonnet & Few-shot & 76.87\% & 65.67\% & 11.19\% & 19.15\% & 15.17\% \\
Claude 4.5 Sonnet & Few-shot CoT & 80.03\% & 69.87\% & 10.16\% & 17.22\% & 12.91\% \\
\midrule
Llama 3.3 & Zero-shot & 67.26\% & 55.68\% & 11.58\% & 20.48\% & 23.84\% \\
Llama 3.3 & Zero-shot CoT & 65.38\% & 52.38\% & 13.01\% & 25.43\% & 22.19\% \\
Llama 3.3 & Few-shot & 69.56\% & 58.10\% & 11.45\% & 19.83\% & 22.06\% \\
Llama 3.3 & Few-shot CoT & 71.95\% & 58.59\% & 13.36\% & 23.10\% & 18.31\% \\
\midrule
DeepSeek Chat & Zero-shot & 68.36\% & 56.26\% & 12.10\% & 22.29\% & 21.45\% \\
DeepSeek Chat & Zero-shot CoT & 69.91\% & 55.65\% & 14.27\% & 24.72\% & 19.64\% \\
DeepSeek Chat & Few-shot & 71.79\% & 59.33\% & 12.46\% & 19.99\% & 20.67\% \\
DeepSeek Chat & Few-shot CoT & 72.89\% & 58.88\% & 14.01\% & 25.98\% & 15.14\% \\
\midrule
o4-mini & Zero-shot & 68.07\% & 55.29\% & 12.78\% & 22.06\% & 22.65\% \\
o4-mini & Zero-shot CoT & 69.36\% & 56.20\% & 13.17\% & 22.06\% & 21.74\% \\
o4-mini & Few-shot & 77.71\% & 63.77\% & 13.94\% & 22.39\% & 13.85\% \\
o4-mini & Few-shot CoT & 81.07\% & 67.55\% & 13.52\% & 21.81\% & 10.64\% \\
\midrule
DeepSeek Reasoner & Zero-shot & 61.76\% & 49.89\% & 11.87\% & 20.67\% & 29.44\% \\
DeepSeek Reasoner & Zero-shot CoT & 72.44\% & 56.71\% & 15.72\% & 29.44\% & 13.85\% \\
DeepSeek Reasoner & Few-shot & 77.09\% & 61.92\% & 15.17\% & 26.08\% & 12.00\% \\
DeepSeek Reasoner & Few-shot CoT & 81.82\% & 69.78\% & 12.03\% & 20.45\% & 9.77\% \\
\bottomrule
\end{tabular}
\end{table*}

\Finding{Advanced prompting strategies generally improve both static accuracy and consistent accuracy, but the improvement is not monotonic across all models.}
\Implication{Across most models, Few-shot CoT achieves the best overall performance. For example, GPT-5.2 improves from $Acc_{cons}$ 53.61\% (Zero-shot) to 63.25\% (Few-shot CoT), and DeepSeek Reasoner improves from 49.89\% to 69.78\%. However, CoT alone does not consistently yield improvements. For instance, Llama 3.3 shows a decrease from 55.68\% to 52.38\% when moving from Zero-shot to Zero-shot CoT, and DeepSeek Chat exhibits a slight drop under Few-shot CoT. This indicates that the effectiveness of prompting strategies depends on the model's capability to properly internalize and utilize reasoning instructions.}

\Finding{CoT prompting alone often increases hidden defects, while Few-shot CoT more effectively mitigates them.}
\Implication{As shown in Figure~\ref{fig:rq4_hdr}, HDR tends to increase under Zero-shot CoT across most models (e.g., GPT-5.2: 15.46\% $\rightarrow$ 16.50\%, DeepSeek Reasoner: 11.87\% $\rightarrow$ 15.72\%), suggesting that unconstrained reasoning chains may introduce additional instability. In contrast, Few-shot CoT yields the lowest HDR for several models (e.g., GPT-5.2: 12.00\%, Claude: 10.16\%), indicating that demonstration-based prompting provides a more reliable bias for structured reasoning. Nevertheless, the reduction remains limited, implying that prompting can only partially alleviate hidden defects.}

\begin{figure}[pos=htbp]
    \centering
    \includegraphics[width=\columnwidth]{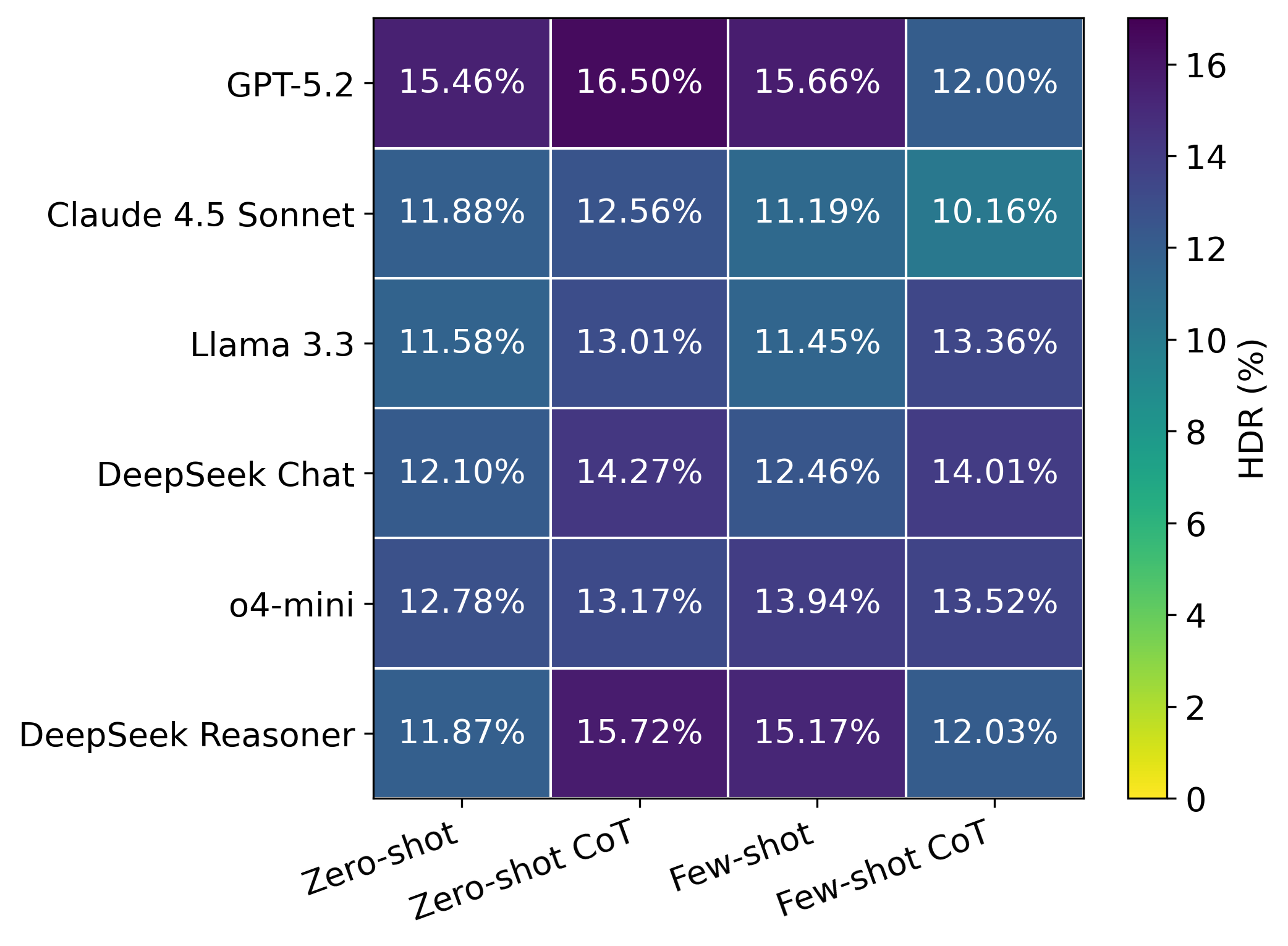}
    \caption{Hidden Defect Rate (HDR) across models and prompting strategies.}
    \label{fig:rq4_hdr}
\end{figure}

\Finding{The effectiveness of prompting in reducing metamorphic oracle violations depends on model type, with better-aligned proprietary and reasoning-optimized models benefit more from Few-shot CoT.}
\Implication{For proprietary and reasoning models (e.g., GPT-5.2, Claude, DeepSeek Reasoner), MVR increases under Zero-shot CoT but decreases under Few-shot CoT, suggesting that examples help stabilize reasoning behavior. In contrast, open-weight models such as Llama 3.3 and DeepSeek Chat exhibit increased MVR even under Few-shot CoT, indicating that additional reasoning instructions may amplify instability when the model lacks sufficient reasoning alignment. This divergence suggests that the effectiveness of advanced prompting is conditioned on the model's inherent reasoning capability.}

\begin{figure}[pos=htbp]
    \centering
    \includegraphics[width=\columnwidth]{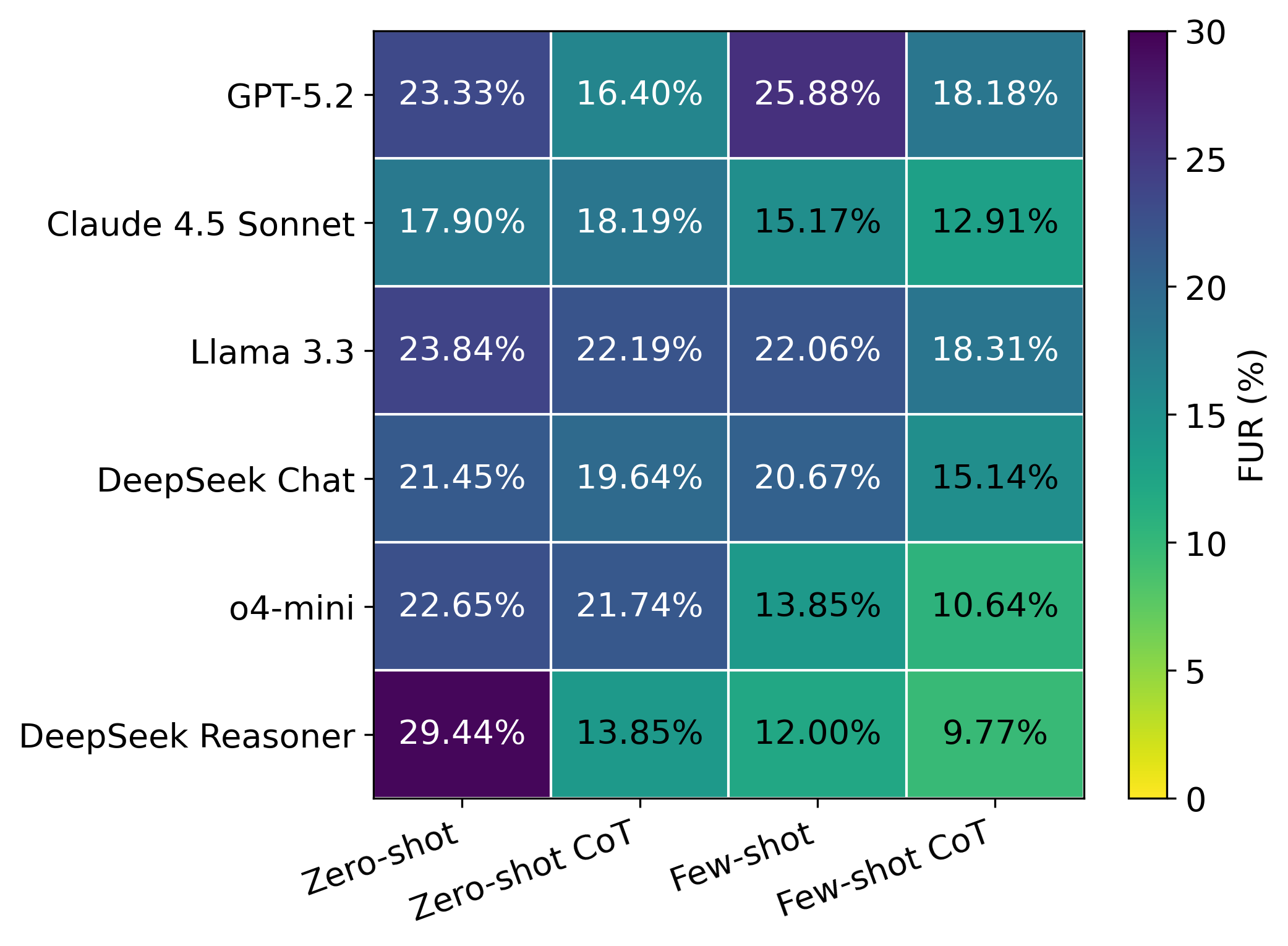}
    \caption{False Unreported Rate (FUR) across models and prompting strategies.}
    \label{fig:rq4_fur}
\end{figure}

\Finding{Few-shot CoT generally achieves the lowest FUR, but non-trivial blind spots remain.}
\Implication{As shown in Figure~\ref{fig:rq4_fur}, FUR is substantially reduced under Few-shot CoT for all models (e.g., DeepSeek Reasoner: 29.44\% $\rightarrow$ 9.77\%, o4-mini: 22.65\% $\rightarrow$ 10.64\%), indicating that fewer errors remain hidden as consistent-but-incorrect predictions. Combined with the low false report rate observed in RQ1, this suggests that LGMT is capable of exposing a large proportion of meaningful logical defects with high precision. However, FUR remains non-negligible (e.g., GPT-5.2: 18.18\%, Llama 3.3: 18.31\%), demonstrating that a significant fraction of errors still evade detection due to consistent but incorrect reasoning.}

Overall, these results indicate that while advanced prompting strategies—especially Few-shot CoT—can partially mitigate the logical defects exposed by LGMT, they do not fundamentally eliminate reasoning inconsistencies. In particular, the persistence of non-trivial HDR and FUR values suggests that improvements in prompting primarily enhance apparent and consistent performance, rather than addressing the root causes of logical reasoning failures.

\section{Discussion}
\label{sec:discussion}

\subsection{Why Do LLMs Struggle with Logical Invariance?}

Our empirical results reveal that current LLMs exhibit significant sensitivity to logic-preserving transformations, particularly under MR-S and MR-C. 
These transformations do not alter the underlying semantics, yet they consistently induce high violation rates across all models. 
This suggests that LLMs do not rely on strictly logical reasoning, but instead rely heavily on lexical-level patterns and lexical cues.

Such reliance on surface representations can explain several observations in our results. 
In particular, the high sensitivity to symbol renaming suggests that models associate meaning with specific tokens, rather than encoding the underlying relational structure.
Second, the instability under conclusion transformations suggests that final decision-making is not derived from a rigorous logical reasoning process, but is instead influenced by how the conclusion is phrased. 
Finally, the fact that Zero-shot CoT often increases inconsistency further indicates that generating longer reasoning chains does not guarantee more stable reasoning; instead, it may introduce additional opportunities for error propagation.

Taken together, these findings suggest that current LLMs approximate logical reasoning through pattern matching rather than performing robust symbolic or structure-aware inference. 
As a result, even logically equivalent formulations can lead to divergent predictions, exposing a fundamental limitation in their reasoning reliability.

\subsection{Implications of Logic-Grounded MT}

Our results highlight a key limitation of traditional reference-based testing: correctness on isolated test cases does not imply robustness under logically equivalent transformations. 
By shifting the evaluation from \textit{correctness} to \textit{invariance}, LGMT provides a fundamentally different perspective on model reliability.

In particular, LGMT evaluates whether a model can preserve its predictions under formally guaranteed logic-preserving transformations. 
This requirement goes beyond static correctness, because the model must preserve its judgment across multiple logically equivalent test cases of the same reasoning task, rather than merely answering one isolated query correctly.
As demonstrated in RQ2, many predictions that appear correct under static evaluation fail to generalize under such transformations, indicating that they arise from coincidental correctness rather than genuine reasoning.

Moreover, grounding MRs in FOL ensures that all follow-up test cases are semantically equivalent to their source test cases, thereby substantially reducing the risk of semantic drift compared with lexical MRs.
This guarantees that detected violations correspond to genuine reasoning defects, rather than artifacts of imperfect test generation.

Therefore, LGMT can be viewed as a principled and scalable alternative to traditional benchmarks, enabling systematic and oracle-free evaluation of logical reasoning robustness in LLMs.

\subsection{Limitations and Future Directions}

Despite its effectiveness, LGMT has inherent limitations. Most notably, it cannot detect cases where a model produces consistent but incorrect predictions across both source and follow-up test cases. This limitation is quantified by the FUR in RQ4, which remains non-negligible even under the strongest prompting settings. It indicates that consistency alone is insufficient to guarantee correctness, and that a portion of logical errors can still evade detection.

Future work can strengthen the testing protocol in two directions. First, \textit{multi-MR validation} can extend the current pairwise source--follow-up design by pairing a single source test case with multiple logically equivalent follow-up variants derived from different MRs. This would enable stronger groupwise consistency checking and make it more difficult for locally stable but incorrect predictions to evade detection. Second, \textit{MR composition} can apply multiple logic-grounded transformations sequentially or jointly to the same source test case. This would generate more structurally diverse yet logically equivalent follow-up cases and impose a stronger test of whether the model can preserve logical consistency under more complex reformulations.

Overall, while LGMT significantly improves the detection of logical reasoning defects, achieving truly robust logical reasoning in LLMs remains an open challenge.

\subsection{Threats to Validity}

\textbf{Construct validity.}
LGMT detects prediction inconsistency rather than absolute correctness. 
Therefore, a model that gives the same but incorrect answer to both the source and follow-up test cases may evade detection. 
We quantify this limitation using FUR. 
In addition, MVR depends on correct output parsing. 
To reduce this threat, we log parser failures and include parser-induced false positives in FRR.

\textbf{Internal validity.}
Although our symbolic transformations preserve logical semantics at the FOL level, the FOL-to-natural-language realization stage may introduce semantic drift. 
To mitigate this threat, we constrain the translation prompt to preserve logical structure and conduct an LLM-assisted audit with human verification. 
The observed FRR is low, but translation errors cannot be fully eliminated.

\textbf{External validity.}
Our evaluation focuses on FOL-style logical entailment tasks derived from FOLIO, LogicNLI, and ProverQA. 
The findings may not directly generalize to other reasoning settings, such as commonsense reasoning, mathematical problem solving, or long-horizon planning. 
Nevertheless, logical entailment provides a controlled setting for studying reasoning consistency under formally defined transformations.

\textbf{Conclusion validity.}
The results may be affected by model versions, API implementations, and prompting choices. 
We reduce randomness by using deterministic decoding whenever supported and evaluate four prompting strategies to assess prompt sensitivity. 
However, future model updates may change the absolute violation rates.

\section{Related Work}
\label{sec:related_work}

In this section, we review related work from three perspectives most relevant to our study: logical reasoning evaluation for LLMs, metamorphic testing for NLP and LLMs, and neurosymbolic approaches that combine language models with formal logic.

\subsection{Evaluating Logical Reasoning in LLMs}

Logical reasoning is a fundamental capability that directly affects the reliability of intelligent systems. 
Traditionally, the NLP community has relied on static benchmarks to evaluate this ability, including datasets such as ARC~\cite{clark2018think}, LogiQA~\cite{liu2021logiqa}, ReClor~\cite{yu2020reclor}, and FOLIO~\cite{han2024folio}. 
With the rise of LLMs, recent studies have conducted broader evaluations of reasoning ability from multiple perspectives. 
Xu et al.~\cite{xu2025are} examine LLMs from deductive, inductive, and abductive perspectives, showing that fluent generation does not necessarily imply reliable logical inference. 
Similarly, LogicBench~\cite{parmar2024logicbench} and Multi-LogiEval~\cite{patel2024multilogieval} evaluate LLMs against formal inference rules and multi-step reasoning depth. 
Other efforts, such as LogiEval~\cite{liu2025evaluating} and LogicAsker~\cite{wan2024logicasker}, further broaden the evaluation scope and reveal persistent reasoning limitations across model scales.

Another line of work constructs rule-based or proof-oriented reasoning tasks to examine whether models can perform multi-step inference over natural-language facts and rules. 
Representative datasets include RuleTaker~\cite{clark2021transformers}, ProofWriter~\cite{tafjord2021proofwriter}, EntailmentBank~\cite{dalvi2021explaining}, and CLUTRR~\cite{sinha2019clutrr}. 
These benchmarks provide controlled settings for evaluating deductive reasoning, proof generation, and relational inference. 
However, they still primarily evaluate model correctness on individual test instances with predefined ground-truth labels.

Prompting strategies have also been widely studied to improve LLM reasoning performance. 
CoT prompting~\cite{wei2022chainofthought,suzgun2023challenging}, self-consistency decoding based CoT prompting~\cite{wang2022selfconsistency}, and least-to-most prompting~\cite{zhou2023leasttomost} encourage models to generate intermediate reasoning steps or decompose complex problems into simpler subproblems. 
These methods can improve benchmark accuracy, but they do not directly ensure that model predictions remain invariant under logically equivalent reformulations. 
This motivates our RQ4, where we examine whether advanced prompting strategies can mitigate the logical defects exposed by LGMT.

Overall, existing reasoning evaluations are still largely centered on static datasets with fixed ground-truth labels. 
Such evaluations may be affected by data contamination and coincidental correctness~\cite{mondorf2024accuracy}. 
LGMT is complementary to these approaches. 
Rather than replacing benchmark-based evaluation, it introduces a consistency-based perspective by checking whether LLM predictions remain invariant under logic-preserving transformations. 
This makes it possible to reveal hidden inconsistencies that are not observable through reference-based evaluation alone.

\subsection{Metamorphic Testing for NLP and LLMs}

MT has long been recognized as an effective mechanism for alleviating the test oracle problem by checking relations among multiple executions~\cite{chen1998metamorphic,chen2021testing,chen2019metamorphic, segura2016survey,murphy2008properties,xie2011testing}. 
Rather than requiring an exact ground-truth label for every individual input, MT verifies whether the outputs of source and follow-up test cases satisfy predefined MRs. 
This makes MT suitable for testing systems, where manually specifying expected outputs for arbitrary inputs is often expensive or infeasible.

Recently, MT has gained increasing attention in NLP and LLM evaluation~\cite{cho2025metamorphic,sok2025metarag,yang2025hallucination}. 
For example, LLMORPH~\cite{cho2025metamorphic} proposes an automated MT framework for LLMs using a set of lexical MRs. 
MT has also been adapted to hallucination detection. 
MetaQA~\cite{yang2025hallucinationa} and DrHall~\cite{wu2025detecting} employ transformations such as synonym substitution or affirmative-to-negative conversion to expose factual inconsistencies in question answering. 
Similarly, Drowzee~\cite{li2024drowzee} leverages logic programming to construct test scenarios for fact-conflicting hallucination detection.

Despite these advances, many existing MT approaches for NLP still rely on lexical, syntactic, or heuristic MRs. 
Because natural language is inherently ambiguous, such MRs may introduce unintended semantic drift, which can affect the validity of follow-up test cases~\cite{cho2025metamorphic}. 
This issue is particularly important for logical reasoning tasks, where small semantic changes may alter the entailment relation.

LGMT differs from these approaches in that its MRs are derived from formal FOL equivalence laws. 
This design preserves logical semantics at the FOL level, while the natural language translation is explicitly controlled and empirically validated. 
As a result, LGMT reduces the risk of semantic drift that commonly affects heuristic MT and enables a more controlled consistency-based evaluation of LLM reasoning.

\subsection{Neurosymbolic Integration for LLMs}

Our work is also related to neurosymbolic approaches that combine LLMs with symbolic reasoning systems. 
Recent frameworks such as Logic-LM~\cite{pan2023logiclm} and LINC~\cite{olausson2023linc} adopt multi-stage pipelines in which LLMs translate natural language into formal representations and external symbolic solvers, such as Prover9 or Z3, perform the reasoning process. 
Other studies use symbolic tools to support dataset construction, reasoning supervision, or evaluation~\cite{qi2024large,cao2025advanced,jiang2025large}.

These approaches primarily use formal logic and symbolic solvers to enhance or support LLM reasoning. 
In contrast, LGMT employs formal logic as an evaluation mechanism rather than an external reasoning component. 
Instead of bypassing model limitations through symbolic solvers, LGMT uses FOL to construct logic-preserving transformations and then evaluates whether the model itself can maintain consistent judgments across logically equivalent test cases. 
In this sense, LGMT provides a black-box, consistency-based framework for diagnosing the reasoning reliability of LLMs.

\section{Conclusion and Future Work}
\label{sec:conclusion}

In this paper, we proposed LGMT, a logic-grounded metamorphic testing framework for evaluating the logical reasoning reliability of large language models. 
By deriving metamorphic relations from first-order logic, LGMT generates logic-preserving follow-up test cases and checks whether model predictions remain consistent across source and follow-up cases. 
This design reduces the semantic drift commonly introduced by lexical perturbations and enables \revise{label-free identification of reasoning defects}.

Our empirical study shows three main findings. 
First, LGMT exposes non-trivial reasoning inconsistencies across all evaluated models, while the audited false report rate remains low. 
Second, compared with traditional reference-based testing, LGMT reveals hidden defects that are not captured by static benchmark accuracy, suggesting that isolated correctness can overestimate reasoning reliability. 
Third, current LLMs are particularly sensitive to symbol-level and conclusion-level transformations. 
Although advanced prompting strategies can partially reduce these inconsistencies, they do not eliminate the observed robustness issues in our experiments.

Future work can extend LGMT in several directions. 
First, \textit{multi-MR validation} can strengthen the current pairwise design by checking consistency across multiple follow-up variants derived from different MRs. 
Second, \textit{MR composition} can generate more structurally diverse yet logically equivalent follow-up cases, enabling a stronger test of logical consistency. 
Third, integrating LGMT with lightweight symbolic verification may help reduce the remaining blind spots captured by FUR and provide a more complete assessment of logical robustness.

\section{Acknowledgments}\label{sec:acknowledgments}
The work was supported by the National Natural Science Foundation of China under Grant 62372021, the National Key R \& D Program of China under Grant 2024YFB3311503, the State Key Laboratory of Complex \& Critical Software Environment under Grant 606. We would like to thank our friend Jianxing Du for his insightful inspiration and careful verification of the proof presented in the appendix. 

\section*{Data Availability Statement}\label{sec:das}
Artifacts (code, prompts, configurations, and processed data) are available at an anonymized repository \url{https://anonymous.4open.science/r/LGMT_code-8A0D/README.md}.

\section*{Declaration of generative AI and AI-assisted technologies in the manuscript preparation process}

During the preparation of this work, the authors used ChatGPT to assist with language refinement. The authors reviewed and edited all content and take full responsibility for the final manuscript.

\printcredits


\bibliographystyle{cas-model2-names}

\bibliography{cas-refs}

\clearpage
\appendix
\section{Completeness of MR-E1 for PNNF}
\label{app:pnnf-completeness}

This appendix formalizes the notion of normalization completeness used in this work and proves it for MR-E1 under the proposed PNNF-oriented rewrite procedure.  
All discussions and results in this appendix are carried out in first-order logic over the allowed symbol set $\{\neg, \land, \lor, \rightarrow, \leftrightarrow, \forall, \exists\}$. 
Regarding the equality symbol $=$, we treat the atomic formula $t_1 = t_2$ syntactically as a binary predicate $Eq(t_1, t_2)$. 
Therefore, for the purpose of structural normalization, equality is subsumed under the definition of atomic formulas and is not listed in the allowed symbol set above.
And all formulas considered in this appendix are of finite length.

\subsection{Syntax and Notation}
We work in standard first-order logic over a fixed signature.  
Let $\mathrm{Form}$ denote the set of all first-order formulas.  
For a formula $\varphi$, let $\mathrm{FV}(\varphi)$ denote the set of free variables of $\varphi$.  
We write $\varphi[x:=y]$ for capture-avoiding substitution of $x$ by $y$.  
Two formulas $\varphi$ and $\psi$ are \emph{$\alpha$-equivalent}, written $\varphi \equiv_\alpha \psi$, if they differ only by consistent renaming of bound variables.  
A formula is in \emph{negation normal form (NNF)} if negation occurs only immediately in front of atomic formulas, and the formula contains no implication or biconditional connectives.
A formula is in \emph{prenex negation normal form (PNNF)} if it has the form $Q_1 x_1 \cdots Q_k x_k .\, M$ where each $Q_i \in \{\forall,\exists\}$ and $M$ is a quantifier-free NNF matrix.  
We define a one-step rewrite relation $\leadsto$ on formulas induced by the metamorphic relations (MRs).  
Let $\leadsto^*$ denote the reflexive transitive closure of $\leadsto$.  
Throughout this appendix, equality between formulas is understood up to $\alpha$-equivalence unless otherwise stated.

\subsection{What Completeness Means in This Work}
The completeness studied here is \emph{not} semantic completeness of first-order logic.  
Instead, it is completeness relative to a normalization target, namely PNNF.  
\begin{definition}[Completeness w.r.t.\ PNNF Normalization]
\label{def:completeness}
A set of MRs is complete with respect to PNNF normalization if:  
(i) every first-order formula can be rewritten into a PNNF formula using these MRs, and  
(ii) the obtained PNNF is unique up to $\alpha$-equivalence. 
\end{definition}
In this appendix, we focus on the proof of the completeness of MR-E1.

\subsection{The termination of MR-E1}
We define a lexicographic measure
\[
m(\varphi)=(i(\varphi),n(\varphi),q(\varphi),s(\varphi),v(\varphi),d(\varphi)) \in \mathbb{N}^6.
\]
Here $i(\varphi)$ counts the occurrences of implication and biconditional connectives in the formula;
$n(\varphi)$ counts occurrences of negation that are not applied directly to atomic formulas;
$q(\varphi)$ is the sum of depths of quantifiers that are not in the outermost prenex prefix position;
$s(\varphi)$ counts occurrences of conjunctions and disjunctions that are not flattened or not ordered according to the fixed canonical ordering of MR-E1.4;
$v(\varphi)$ counts the number of variable conflicts that block the application of MR-E1.3;
$d(\varphi)$ counts deviations in the ordering of bound variables within maximal same-quantifier blocks with respect to the canonical order enforced by MR-E1.6.
The measure $m(\varphi)$ is compared using the standard lexicographic order on $\mathbb{N}^6$.

\begin{lemma}[Strict Decrease]
\label{lem:strict-decrease}
If $\varphi \leadsto \psi$, then $m(\psi) < m(\varphi)$.
\end{lemma}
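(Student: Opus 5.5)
The plan is a case analysis over the six rule families of MR-E1. For each family I will show that a one-step rewrite $\varphi \leadsto \psi$ strictly decreases one component of $m$ and leaves every earlier component unchanged or smaller. Lexicographic comparison on $\mathbb{N}^6$ then gives $m(\psi) < m(\varphi)$. Later components may increase arbitrarily, which is exactly why the order of coordinates in $m$ matters. Every rule is applied at some subformula position, so I will first record a context lemma: each of $i,n,q,s,v,d$ is additive (or monotone) over the syntax tree, apart from terms that depend on the position of a node relative to the prenex prefix. A local change at a position therefore changes the global count by the same sign. Only $q$ and $d$ depend on the surrounding context, and they need separate care.

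\textbf{MR-E1.1.} This rule removes one $\rightarrow$ or $\leftrightarrow$ and introduces only $\neg,\lor,\land$, so $i$ drops by one. For the biconditional, $\varphi$ and $\psi$ are duplicated. Duplication could double their internal $\rightarrow$ counts, so the measure as stated is not obviously decreasing. My first attempt is to restrict the strategy to innermost-first application. Then the duplicated subformulas contain no implications, and $i$ strictly drops. Alternatively, I would replace $i$ by a weighted count such as $\sum 2^{\text{size}}$, accepting a small amendment to the measure.

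\textbf{MR-E1.2.} These rules leave $i$ unchanged. The double-negation rule removes two negations. Each De Morgan or duality rule replaces one non-atomic negation by negations on strictly smaller subformulas. The issue is that $n$ counts the new negations too, unless they sit on atoms. So I will argue with the multiset of depths, or equivalently the sizes of the negated subformulas, which strictly decreases. I will propose refining $n$ to the sum of the sizes of the negated non-atomic subformulas, which makes the decrease immediate.

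\textbf{MR-E1.3.} This rule does not touch $i$ or $n$. Lifting a quantifier past $\land/\lor$ decreases its depth by one and leaves other quantifier depths unchanged, so $q$ drops.

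\textbf{MR-E1.4, MR-E1.5, MR-E1.6.} Flattening or sorting (E1.4) preserves $i,n,q$ and strictly reduces $s$. E1.4 can only apply when the canonical order is violated, since otherwise the rule is the identity and is excluded from $\leadsto$. $\alpha$-renaming (E1.5) preserves $i,n,q,s$, given that $\prec$ is defined modulo $\alpha$. It removes the blocking conflict, so $v$ drops. Block reordering (E1.6) touches only the prefix, and for such a reordering $d$ drops.

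\textbf{Main obstacle.} The hard part is that the informal components are not literally monotone as defined. The duplication in the $\leftrightarrow$ rule and the newly created negations are the clearest cases. There is a further problem: renaming (E1.5) can change the lexicographic order used by $\prec$ and so could increase $s$. I will fix this by taking $\prec$ on $\alpha$-equivalence classes. Similarly, lifting (E1.3) might expose new variable conflicts, but $v$ comes after $q$, so any increase in $v$ there is harmless. In short, my approach is to state precise versions of each coordinate, with the weighted $i$ and size-based $n$ if needed, and then verify the per-rule table of "which component decreases, which earlier components do not increase". I expect the E1.1/E1.2 duplication issue to need the most care.
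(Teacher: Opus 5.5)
\textbf{Relation to the paper's proof.} Your skeleton is the paper's. Its proof is exactly the per-rule assignment (E1.1 lowers $i$, E1.2 lowers $n$, E1.3 lowers $q$, E1.4 lowers $s$, E1.5 lowers $v$, E1.6 lowers $d$), plus the remark that later coordinates may grow. Each decrease is asserted rather than checked. You are right that the literal $i$ can grow under biconditional elimination (duplication) and the literal $n$ can grow under De Morgan with non-atomic operands. That is a hole in the paper's argument as well, so $m$ must be amended.

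\textbf{The gap: your amendments break your own context lemma.} Restricting E1.1 to innermost-first proves only a strategy-restricted statement, whereas the lemma and the Termination corollary concern arbitrary steps. More seriously, a weight that depends on the size of the subformula at a node is not additive over the syntax tree. A rewrite changes the size of the argument of every enclosing $\rightarrow$, $\leftrightarrow$ or $\neg$. Take sizes as numbers of syntax-tree nodes.
\begin{itemize}
\item With $\sum 2^{\mathrm{size}}$, expanding the inner biconditional of $(P\leftrightarrow R)\rightarrow S$ raises the first coordinate from $2^5+2^3=40$ to $2^{11}$.
\item With the sum of sizes of negated non-atomic arguments, apply De Morgan to the inner negation of $\neg\neg(\forall x\,P(x)\land\forall y\,R(y))$. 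That negation's contribution drops from $5$ to $2+2$, but the outer one rises from $6$ to $7$. The total stays at $11$, so no strict decrease is guaranteed.
\end{itemize}

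\textbf{Two unamended claims also fail.} In E1.3 the quantifiers inside $\psi$ move under $Qx$. With ordinary tree depth (root at $0$), $(\forall x\,P(x))\land\exists y\exists z\,R(y,z)$ goes from $q=1+1+2=4$ to $q=2+3=5$. Your claim that other depths are unchanged holds only if depth counts propositional-connective ancestors; that choice also gives prefix quantifiers depth $0$ automatically.

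In E1.4, sorting a child changes the syntax its parent is compared on. Under string order, sorting $d\land a$ inside $(b\land c)\lor(d\land a)$ unsorts the disjunction, so $s$ stays at $1$. Flattening can likewise merge two sorted nodes into an unsorted one, e.g.\ $b\land(a\land c)$ when atoms precede compounds. Compare children modulo associativity--commutativity and $\alpha$, extending your E1.5 fix. Then let $s$ be twice the number of same-connective parent--child pairs plus the number of unsorted nodes.

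\textbf{The missing principle for $i$ and $n$.} Each coordinate must be built from strictly monotone constructors and be invariant at the redex root under the rules meant to leave it alone. For $i$, set
\[
I(\chi_1\rightarrow\chi_2)=1+I(\chi_1)+I(\chi_2), \qquad I(\chi_1\leftrightarrow\chi_2)=1+2I(\chi_1)+2I(\chi_2),
\]
with $I$ additive on $\land,\lor$, the identity on $\neg$ and on quantifiers, and $0$ on atoms. Then E1.1 lowers $I$ by at least one in every context, and E1.2--E1.6 preserve it.

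For $n$, weight each negation of a non-atomic $\chi$ by the number of non-$\neg$ symbols in $\chi$. E1.2, E1.3, E1.5 and E1.6 leave that number unchanged at the redex root, and E1.4 never increases it. Hence enclosing negations are unaffected, and each E1.2 step lowers its own redex's contribution by at least one.
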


\begin{proof}
We argue by a case analysis on the applied metamorphic relation.
Each MR is oriented from non-canonical to canonical form and is designed to strictly decrease one designated component of $m$.

MR-E1.1 strictly decreases $i(\cdot)$ by eliminating implication and biconditional connectives.
MR-E1.2 strictly decreases $n(\cdot)$ by eliminating a non-atomic negation.
MR-E1.3 strictly decreases $q(\cdot)$ by lifting a quantifier toward the prenex prefix.
MR-E1.4 strictly decreases $s(\cdot)$ by flattening and reordering the formula.
MR-E1.6 strictly decreases $d(\cdot)$ by reordering bound variables within the same quantifier block.
MR-E1.5 is applicable only when a variable conflict blocks the application of MR-E1.3.
Each application of MR-E1.5 eliminates such a conflict and does not introduce new ones, and therefore strictly decreases $v(\cdot)$.

In all cases, the first component in the lexicographic measure $(i,n,q,s,v,d)$ that differs between $\varphi$ and $\psi$ strictly decreases.
Hence $m(\psi) < m(\varphi)$.
Since the lexicographic order on $\mathbb{N}^6$ is well founded and therefore cannot decrease infinitely, the rewriting process terminates.
Note that a rewrite step may increase some later components of the tuple $m(\cdot)$.
This does not affect the lexicographic comparison, because the earliest component at which $\varphi$ and $\psi$ differ strictly decreases.
\end{proof}

\begin{corollary}[Termination]
\label{cor:termination}
There exists no infinite rewrite sequence under $\leadsto$.
\end{corollary}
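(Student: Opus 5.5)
The corollary follows from Lemma~\ref{lem:strict-decrease} together with the well-foundedness of the lexicographic order on $\mathbb{N}^6$. I would argue by contradiction. Suppose there is an infinite rewrite sequence $\varphi_0 \leadsto \varphi_1 \leadsto \varphi_2 \leadsto \cdots$. Applying Lemma~\ref{lem:strict-decrease} to each step yields an infinite strictly descending chain
\[
m(\varphi_0) > m(\varphi_1) > m(\varphi_2) > \cdots
\]
in $(\mathbb{N}^6, <_{\mathrm{lex}})$. The remaining work is to show that no such chain exists.

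I would give a short self-contained proof of well-foundedness by induction on the number of components. For $k=1$ this is the well-ordering of $\mathbb{N}$. For the inductive step, write $\mathbb{N}^{k+1}=\mathbb{N}\times\mathbb{N}^k$ and take a descending chain $(a_j,\vec b_j)$. The sequence of first components $a_j$ is non-increasing, so it is eventually constant, say from index $J$ on. From $J$ onward, strict lexicographic descent forces $\vec b_j$ to descend strictly in $\mathbb{N}^k$, which contradicts the induction hypothesis. With $k=6$, the chain above cannot exist.

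Two side points need to be checked so that the argument is airtight. First, $m$ must really map into $\mathbb{N}^6$. Every formula is finite, so each counting component ($i,n,q,s,v,d$) is a finite natural number. Second, Lemma~\ref{lem:strict-decrease} must apply to every one-step rewrite, including rewrites at arbitrary subformula positions, since the MRs are rule schemas applied to subformulas. Because each component of $m$ is a global count over the whole formula, a local rewrite changes it in the same way as the lemma's case analysis describes. I therefore take the lemma as already covering contextual application.

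The main obstacle is not the well-foundedness argument, which is routine. It is trusting that Lemma~\ref{lem:strict-decrease} holds uniformly. For the corollary, though, the lemma is an earlier result that may be assumed, so the proof reduces to the descending-chain contradiction above.
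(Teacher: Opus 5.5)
Your argument is correct relative to Lemma~\ref{lem:strict-decrease} and follows the paper's route exactly. The paper derives the corollary from that lemma together with the well-foundedness of the lexicographic order on $\mathbb{N}^6$ (asserted at the end of the lemma's proof), and you simply make the well-foundedness explicit by induction on the number of components.

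You are also right that the real burden sits in the lemma, and as literally stated it is false. For atomic $A,B,C,D$, De Morgan sends $\neg((A\land B)\land(C\land D))$, which has $n=1$, to $\neg(A\land B)\lor\neg(C\land D)$, which has $n=2$; biconditional elimination on $(A\rightarrow B)\leftrightarrow(C\rightarrow D)$ raises $i$ from $3$ to $4$. A rigorous termination proof therefore needs a repaired measure, not a different argument for the corollary.
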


\subsection{Existence of PNNF Normal Forms}

\begin{lemma}[NNF Reachability]
\label{lem:nnf-reachability}
For every formula $\varphi$, there exists a formula $\varphi_{\mathrm{NNF}}$ such that
$\varphi \leadsto^* \varphi_{\mathrm{NNF}}$
and
$\varphi_{\mathrm{NNF}}$ is in NNF.
\end{lemma}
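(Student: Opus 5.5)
I will prove the statement constructively by exhibiting a rewrite strategy that uses only MR-E1.1 and MR-E1.2. The argument proceeds in two phases and relies on Corollary~\ref{cor:termination} for termination; the real content is showing that the normal forms reached are in NNF. In the first phase I repeatedly apply MR-E1.1 to any subformula of the form $\varphi\rightarrow\psi$ or $\varphi\leftrightarrow\psi$. Since the one-step relation $\leadsto$ applies rules to arbitrary subformulas, this is always possible while such a connective remains. Each application strictly decreases $i(\cdot)$, and no rule in MR-E1.2 introduces $\rightarrow$ or $\leftrightarrow$. After finitely many steps we therefore reach a formula $\varphi_1$ with $i(\varphi_1)=0$ and $\varphi\leadsto^*\varphi_1$.

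In the second phase I restrict to the rules of MR-E1.2 and apply them exhaustively to $\varphi_1$. By Lemma~\ref{lem:strict-decrease} and Corollary~\ref{cor:termination}, this restricted system also terminates, so we obtain some $\varphi_{\mathrm{NNF}}$ with $\varphi_1\leadsto^*\varphi_{\mathrm{NNF}}$ to which no MR-E1.2 rule applies. Since none of these rules introduces $\rightarrow$ or $\leftrightarrow$, we still have $i(\varphi_{\mathrm{NNF}})=0$. As a safeguard against relying on the informally specified components of $m$, I will also give a direct termination argument for this phase. Take the multiset of sizes of the subformulas lying directly under a negation, or a rank in which each such negation is weighted by $2^{|\text{body}|}$. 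Each of the five MR-E1.2 rules replaces one negation over a body of size $k$ by at most two negations over strictly smaller bodies. This decreases the multiset ordering, which is well founded.

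The remaining step is to show that a formula $\chi$ with no implication or biconditional and no applicable MR-E1.2 redex is in NNF. I will prove this by structural induction on $\chi$, or equivalently by contraposition. Suppose some negation $\neg\theta$ occurs in $\chi$ with $\theta$ not atomic. Because $\chi$ contains no $\rightarrow$ or $\leftrightarrow$, $\theta$ has one of the forms $\neg\theta'$, $\theta_1\wedge\theta_2$, $\theta_1\vee\theta_2$, $\forall x\,\theta'$, or $\exists x\,\theta'$. Each of these shapes is exactly the left-hand side of one MR-E1.2 rule, contradicting irreducibility. Equality atoms $Eq(t_1,t_2)$ are atomic by the convention of this appendix, so they raise no exception. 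Hence every negation in $\varphi_{\mathrm{NNF}}$ sits directly on an atom, which together with $i(\varphi_{\mathrm{NNF}})=0$ is the definition of NNF. Composing the two phases gives $\varphi\leadsto^*\varphi_{\mathrm{NNF}}$.

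I expect the main obstacle to be the termination of the second phase. The De Morgan and quantifier-duality rules duplicate negations, so the naive count $n(\cdot)$ can increase: $\neg(\varphi\wedge\psi)$ with $\varphi,\psi$ themselves non-atomic produces two non-atomic negations from one. Consequently Lemma~\ref{lem:strict-decrease} cannot be quoted literally for this phase. I would base the argument on the size-weighted or multiset measure described above and cite Corollary~\ref{cor:termination} only as corroboration. The case analysis on the shapes of $\theta$ is routine.
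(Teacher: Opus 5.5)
Your two-phase plan matches the paper's proof (exhaust MR-E1.1, then MR-E1.2). Your diagnosis is correct: the paper's claim that every MR-E1.2 step strictly decreases $n(\cdot)$ is false, and your irreducibility case analysis supplies a step the paper leaves implicit.

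\textbf{Gap in phase 2.} Your replacement measure also fails. De Morgan makes the redex one symbol longer, so every negation \emph{enclosing} the redex gets a larger body. Your local accounting (``one negation replaced by at most two over smaller bodies'') ignores this. Concretely, rewriting the inner negation of $\neg\neg(P\wedge Q)$ gives $\neg(\neg P\vee\neg Q)$. The multiset of negated-body sizes goes from $\{4,3\}$ to $\{5,1,1\}$, which is larger in the multiset order, and $\sum 2^{|\mathrm{body}|}$ goes from $24$ to $36$. So you have not shown that exhaustive application of MR-E1.2, with an arbitrary choice of redex, terminates. Citing the Termination corollary as corroboration does not help, since it rests on the same false decrease claims.

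\textbf{Gap in phase 1.} Phase 1 has the same blind spot, inherited from the paper. MR-E1.1 applied to $\varphi\leftrightarrow\psi$ duplicates both $\varphi$ and $\psi$, so $i(\cdot)$ can increase. For example, $(A\rightarrow B)\leftrightarrow(C\rightarrow D)$ has $i=3$, while its expansion has $i=4$.

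\textbf{Repairs.} Both are cheap, because the lemma is existential and you may fix the rewrite strategy.
In phase 1, always rewrite an innermost $\rightarrow$ or $\leftrightarrow$ (one whose arguments contain neither connective). Then $i$ drops by exactly one per step.
In phase 2, always rewrite an outermost MR-E1.2 redex. Once $i=0$, any negation enclosing a redex has a non-atomic body and is therefore itself a redex. Hence the chosen redex has no enclosing negation, and your multiset (or $2^{|\mathrm{body}|}$) argument goes through verbatim.
Alternatively, you can allow arbitrary strategies and use a monotone interpretation such as $h(\text{atom})=2$, $h(\neg\chi)=2^{h(\chi)}$, $h(\chi_1\circ\chi_2)=h(\chi_1)+h(\chi_2)+1$, $h(Qx\,\chi)=h(\chi)+1$. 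Each MR-E1.2 rule strictly lowers $h$ at the redex, and every constructor is strictly monotone, so $h$ decreases wherever the rewrite occurs.
With either fix your proof is complete and more rigorous than the paper's.
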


\begin{proof}
We first repeatedly apply MR-E1.1.
Each application of MR-E1.1 strictly decreases $i(\cdot)$, and therefore this process terminates with a formula $\varphi'$ such that $i(\varphi')=0$.
We then apply repeatedly MR-E1.2 to $\varphi'$.
Each application of MR-E1.2 strictly decreases $n(\cdot)$, and hence this second process terminates with a formula $\varphi_{\mathrm{NNF}}$ satisfying $n(\varphi_{\mathrm{NNF}})=0$.
Since $i(\varphi_{\mathrm{NNF}})=0$ and $n(\varphi_{\mathrm{NNF}})=0$, the resulting formula is in NNF.
\end{proof}

\begin{lemma}[PNNF Reachability from NNF]
\label{lem:pnnf-from-nnf}
For every NNF formula $\varphi$, there exists $\varphi_{\mathrm{PNNF}}$ such that
$\varphi \leadsto^* \varphi_{\mathrm{PNNF}}$
and
$\varphi_{\mathrm{PNNF}}$ is in PNNF.
\end{lemma}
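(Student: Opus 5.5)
The plan is to prove the lemma by structural induction on the NNF formula $\varphi$, using only MR-E1.3 (quantifier lifting) and MR-E1.5 ($\alpha$-renaming). MR-E1.1 and MR-E1.2 will never fire, because $\varphi$ is already in NNF. The first thing to check is that MR-E1.3 and MR-E1.5 preserve NNF. Both rules only move or rename quantifiers. They introduce no $\rightarrow$, no $\leftrightarrow$, and no negation above an atom, so every intermediate formula stays in NNF. Since Corollary~\ref{cor:termination} already rules out infinite rewrite sequences, what remains is an existence claim: some finite sequence of these steps reaches a prenex form.

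The induction runs over the NNF grammar, whose cases are $\varphi ::= A \mid \neg A \mid \varphi_1 \land \varphi_2 \mid \varphi_1 \lor \varphi_2 \mid Qx\,\varphi_1$.
\begin{itemize}
\item \emph{Literal case.} A literal is already PNNF, with an empty prefix.
\item \emph{Quantifier case.} For $Qx\,\varphi_1$, the induction hypothesis gives $\varphi_1 \leadsto^* Q_1x_1\cdots Q_kx_k.M$. Applying those rewrites under the context $Qx\,[\cdot]$ yields $Qx\,Q_1x_1\cdots Q_kx_k.M$, which is PNNF. This step needs the convention that $\leadsto$ is closed under formula contexts, which the paper states in its description of MR-E ("rule schema that applies to arbitrary subformulas").
\item \emph{Binary case.} For $\varphi_1 \circ \varphi_2$ with $\circ\in\{\land,\lor\}$, the hypothesis gives $\varphi_1 \leadsto^* \vec{Q}\vec{x}.M_1$ and $\varphi_2 \leadsto^* \vec{Q'}\vec{y}.M_2$. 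Applying these inside the left and right contexts gives $(\vec{Q}\vec{x}.M_1)\circ(\vec{Q'}\vec{y}.M_2)$.
\end{itemize}

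To finish the binary case, I would do an inner induction on the length of the left prefix and then the right prefix. Suppose the outermost left quantifier is $Q_1x_1$. If $x_1\notin\mathrm{FV}(\text{right operand})$, MR-E1.3 lifts it: $(Q_1x_1\,\chi)\circ\rho \leadsto Q_1x_1(\chi\circ\rho)$. Otherwise I first apply MR-E1.5 to rename $x_1$ to a fresh $y\notin \mathrm{Var}(\chi)\cup\mathrm{Var}(\rho)$. Then the side condition of MR-E1.3 holds, and I lift. Each lift shortens the left prefix by one while keeping the right operand unchanged, and the remaining left matrix is again prenex. After the left prefix is exhausted, the symmetric rule $\psi\circ(Qx\,\varphi)\leadsto Qx(\psi\circ\varphi)$ handles the right prefix. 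MR-E1.5 is stated only for a left-hand quantifier, so here I either use its mirror image or first rename bound variables apart once and for all. The result is $\vec{Q}\vec{x}\,\vec{Q'}\vec{y}.(M_1\circ M_2)$. The matrix $M_1\circ M_2$ is quantifier-free and NNF, so the formula is PNNF.

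I expect the main obstacle to be the side conditions on $\alpha$-renaming. MR-E1.5 is written only for a quantifier on the left of $\circ$ and requires $x\in\mathrm{Var}(\psi)$. So I must argue that a fresh variable always exists, which holds because formulas are finite and the variable supply is infinite. I must also argue that renaming a bound variable inside $\chi$ does not create a new clash further up the prefix. My first attempt would be a preliminary pass that renames all bound variables of the NNF formula so that they are pairwise distinct and disjoint from its free variables (a "rectified" formula). The rectified formula is $\alpha$-equivalent to the original, and throughout the appendix formulas are identified up to $\alpha$-equivalence. After this pass every lift satisfies $x\notin\mathrm{FV}(\psi)$ automatically. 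The right-hand lifts then need no renaming at all, which sidesteps the asymmetry of MR-E1.5.
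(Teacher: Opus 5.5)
Your argument is correct, but it is organized differently from the paper's proof.

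The paper argues globally. It repeatedly applies MR-E1.3 wherever possible and inserts an MR-E1.5 step whenever a lift is blocked. It concludes that the process stops at a formula with $q=0$ (all quantifiers in the prefix) because each lift strictly decreases the component $q$ of the measure $m$. NNF survives because neither rule touches negation.

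You instead build an explicit rewrite sequence by structural induction on the NNF grammar, with an inner induction on prefix length in the binary case. A one-time rectification removes all side-condition bookkeeping, so MR-E1.3 alone suffices.

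The paper's route is shorter and reuses its termination machinery, but it leaves two facts implicit that your construction supplies:
\begin{itemize}
\item A non-prenex NNF formula always has an enabled lift. A blocked right-hand quantifier, as in $P(x)\land\forall x\,Q(x)$, cannot be renamed by MR-E1.5 as written, so the paper tacitly needs a mirror rule or the same $\alpha$-equivalence convention you invoke.
\item $q$ really drops at each lift. This holds if depth counts only the $\land/\lor$ ancestors of a quantifier. With ordinary depth it fails: lifting $\forall x$ in $(\forall x\,P(x))\land((\exists y\,R(y))\land(\exists z\,S(z)))$ pushes $\exists y$ and $\exists z$ one level deeper, and the sum does not decrease.
\end{itemize}

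Because your sequence is finite by construction, drop the appeal to Corollary~\ref{cor:termination}. You do not need it, and without it your lemma does not depend on the measure at all.

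Two points deserve an explicit sentence in your write-up:
\begin{itemize}
\item The rectification pass is justified only by the appendix's identification of $\alpha$-equivalent formulas. MR-E1.5 renames only a quantifier immediately to the left of $\land/\lor$, so it cannot rectify a formula in general.
\item MR-E1.3 preserves rectifiedness, since binders and free variables are unchanged and no occurrence is recaptured. This invariant is what makes every later lift legal, including the lifts of the right-hand prefix over $M_1$.
\end{itemize}
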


\begin{proof}
Repeatedly apply MR-E1.3.
Whenever MR-E1.3 is blocked by its side condition, apply MR-E1.5 to remove the variable conflict.
Each MR-E1.3 step strictly decreases $q(\cdot)$.
Hence after finitely many steps we obtain a formula $\varphi_{\mathrm{PNNF}}$ with $q(\varphi_{\mathrm{PNNF}})=0$.
Since $\varphi$ is in NNF and MR-E1.3/MR-E1.5 do not introduce non-atomic negations, the matrix remains in NNF.
Therefore $\varphi_{\mathrm{PNNF}}$ is in PNNF.
\end{proof}

\begin{lemma}[PNNF Reachability]
\label{lem:pnnf-reachability}
For every formula $\varphi$, there exists a PNNF formula $\psi$ such that
$\varphi \leadsto^* \psi$.
\end{lemma}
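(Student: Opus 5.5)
The plan is to obtain Lemma~\ref{lem:pnnf-reachability} by composing the two reachability results already established, using transitivity of $\leadsto^*$. Given an arbitrary formula $\varphi$, I will first invoke Lemma~\ref{lem:nnf-reachability}. This yields a formula $\varphi_{\mathrm{NNF}}$ in NNF with $\varphi \leadsto^* \varphi_{\mathrm{NNF}}$, obtained by exhausting MR-E1.1 and then MR-E1.2. Since $\varphi_{\mathrm{NNF}}$ is in NNF, it meets the hypothesis of Lemma~\ref{lem:pnnf-from-nnf}. That lemma then supplies a PNNF formula $\psi := (\varphi_{\mathrm{NNF}})_{\mathrm{PNNF}}$ with $\varphi_{\mathrm{NNF}} \leadsto^* \psi$.

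Because $\leadsto^*$ is by definition the reflexive transitive closure of $\leadsto$, it is transitive. Concatenating the two rewrite sequences therefore gives $\varphi \leadsto^* \psi$ with $\psi$ in PNNF, which is exactly the claim.

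A few side remarks would make the argument airtight.
\begin{itemize}
\item All rewrite steps used are MR-E1 instances applied to subformulas, so both sequences lie within the same relation $\leadsto$.
\item Termination of each phase is already part of the cited lemmas, and Corollary~\ref{cor:termination} guarantees it globally, so no new measure argument is needed.
\item Every step is an equivalence-preserving MR, so $\psi \equiv \varphi$ semantically. This is not required by the statement but is worth recording for the later uniqueness part of Definition~\ref{def:completeness}.
\end{itemize}

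The main obstacle is not in the composition itself but in making sure the second phase really preserves NNF, so that the final formula qualifies as PNNF. Lemma~\ref{lem:pnnf-from-nnf} asserts that MR-E1.3 and MR-E1.5 introduce no non-atomic negations and no implications. I would rely on that assertion, and check it only briefly: quantifier lifting merely relocates a quantifier across $\wedge/\vee$, and $\alpha$-renaming substitutes variables. Neither creates $\neg$, $\rightarrow$ or $\leftrightarrow$, so the matrix remains quantifier-free NNF once $q=0$.
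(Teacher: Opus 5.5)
Your proof is correct and is exactly the paper's argument: apply Lemma~\ref{lem:nnf-reachability} to reach an NNF formula, then apply Lemma~\ref{lem:pnnf-from-nnf} to reach a PNNF formula, and concatenate the two rewrite sequences using transitivity of $\leadsto^*$. Your side remarks (global termination, semantic equivalence, NNF preservation under MR-E1.3/MR-E1.5) are not needed for the composition itself, which, like the paper's proof, rests entirely on the two cited lemmas.
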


\begin{proof}
Apply Lemma~\ref{lem:nnf-reachability} to obtain an NNF formula.
Then apply Lemma~\ref{lem:pnnf-from-nnf} to obtain a PNNF formula.
\end{proof}

\subsection{Uniqueness of PNNF Normal Forms}


\begin{definition}[Canonical PNNF and $\mathrm{PNNF}^*$]
A PNNF formula $\varphi$ is \emph{canonical} iff $m(\varphi) = (0,0,0,0,0,0)$.
Equivalently, $\varphi$ satisfies the canonicality conditions enforced by MR-E1.1--MR-E1.6.
We refer to canonical PNNF formulas as $\mathrm{PNNF}^*$ formulas.
We denote the set of all $\mathrm{PNNF}^*$ formulas by $\mathrm{PNNF}^*_{s}$.
\end{definition}

\begin{definition}[Normalization Procedure $NP$]
\label{def:nf}
We define a normalization procedure $NP(\cdot)$ that maps any first-order logic formula to a canonical $\mathrm{PNNF}^*$ formula by a fixed rewrite strategy:
First, apply MR-E1.1 and MR-E1.2 exhaustively to reach NNF.
Then lift quantifiers by MR-E1.3, using MR-E1.5 when alpha-renaming is needed to satisfy the side condition of quantifier lifting.
Next, apply MR-E1.4 to canonicalize the quantifier-free NNF matrix.
Finally, apply MR-E1.6 to canonically order each maximal same-quantifier block of the prefix.

\end{definition}

\begin{lemma}[Correctness of $NP$]
\label{lem:nf-correctness}
For every formula $\varphi$, we have $NP(\varphi)\in \mathrm{PNNF}^*_{s}$.
\end{lemma}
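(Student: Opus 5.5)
The plan is to follow the four phases of $NP$ in Definition~\ref{def:nf} and show that after each phase the corresponding components of the measure $m$ from Lemma~\ref{lem:strict-decrease} are zero and stay zero in all later phases. At the end $m(NP(\varphi))=(0,0,0,0,0,0)$, which is exactly membership in $\mathrm{PNNF}^*_{s}$. Corollary~\ref{cor:termination} guarantees that each phase, run exhaustively, stops after finitely many steps, so $NP(\varphi)$ is well defined.

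\textbf{Phases 1 and 2.} Phase~1 (MR-E1.1, MR-E1.2) yields a formula with $i=n=0$ by Lemma~\ref{lem:nnf-reachability}. For Phase~2 I would reuse the argument of Lemma~\ref{lem:pnnf-from-nnf}: exhaustive application of MR-E1.3, with MR-E1.5 inserted whenever the side condition blocks lifting, reaches $q=0$. Afterwards no blocking conflict is left, so $v=0$. Neither rule creates implications, biconditionals, or non-atomic negations, so $i=n=0$ is preserved.

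\textbf{Phases 3 and 4.} In Phase~3, MR-E1.4 acts only on the quantifier-free matrix. Flattening and sorting by the fixed total order $\prec$ terminate and leave $s=0$. These rules do not touch the prefix, negations, or variable names, so $i,n,q,v$ stay zero. In Phase~4, MR-E1.6 permutes variables inside each maximal same-quantifier block. Two points need checking here. First, the block boundaries do not change, so $q$ stays zero. Second, the matrix is not modified, so $s$, $i$, $n$ are unaffected. Exhaustive application then gives $d=0$.

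\textbf{Main obstacle.} The hard part is showing that later phases do not reintroduce earlier defects, specifically the interaction between $s$ and $d$. The order $\prec$ in MR-E1.4 may refer to bound variable names. MR-E1.6, in turn, orders variables by their first occurrence in the matrix, and that occurrence order depends on the sorted matrix.

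I would first make $\prec$ insensitive to bound-variable names, comparing only up to $\alpha$-equivalence, for example via de Bruijn-style positions. Then permuting a quantifier block cannot unsort the matrix. Since the lemma asserts only that $NP(\varphi)\in\mathrm{PNNF}^*_s$ and not uniqueness, that suffices. If this choice of $\prec$ is not available, the fallback is to iterate Phases~3--4 until a fixed point is reached. This terminates because each pass strictly decreases $(s,d)$ lexicographically while leaving the other components at zero.
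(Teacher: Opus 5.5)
Your four-phase argument is the paper's proof with the preservation checks filled in, and that part is sound. The paper only asserts that MR-E1.4 and MR-E1.6 establish their conditions and concludes $m(NP(\varphi))=(0,0,0,0,0,0)$. Your Phase~4 observation, that MR-E1.6 leaves the matrix $M$ literally unchanged, already settles the $s$--$d$ interaction whenever $\prec$ is a fixed syntactic order on formulas, such as the lexicographic order the paper suggests. In that case $s$ depends only on $M$, so permuting the prefix cannot change it, whether or not $\prec$ inspects variable names. Name-sensitivity of $\prec$ bears on $\alpha$-invariance of the output, i.e.\ on Lemma~\ref{lem:np-self} and Theorem~\ref{the:pnnf-complete}, not on this lemma.

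Your final paragraph, however, is wrong, and adopting it would break the proof. A de~Bruijn-style $\prec$ makes the sortedness of $M$ depend on binder positions in the prefix, which is exactly what MR-E1.6 changes. With that choice, permuting a block \emph{can} unsort the matrix, and your Phase~4 step fails. For instance, take de~Bruijn indices (innermost binder $0$) and compare atoms by index:
\begin{itemize}
\item Start from $\forall x\forall y.(P(x)\land P(y))$, whose matrix reads $P(1)\land P(0)$.
\item Phase~3 gives $\forall x\forall y.(P(y)\land P(x))$.
\item Phase~4 orders the block by first occurrence, giving $\forall y\forall x.(P(y)\land P(x))$, whose matrix again reads $P(1)\land P(0)$.
\item Phase~3 gives $\forall y\forall x.(P(x)\land P(y))$, and Phase~4 returns the original formula.
\end{itemize}
So iterating Phases~3--4 need not terminate. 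Your claim that each pass decreases $(s,d)$ lexicographically is unfounded, because an MR-E1.6 step can raise $s$, which precedes $d$. Appealing to Corollary~\ref{cor:termination} does not help either: under such a $\prec$, Lemma~\ref{lem:strict-decrease} itself fails for MR-E1.6 steps. Drop that paragraph and keep $\prec$ syntactic.
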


\begin{proof}
By Lemma~\ref{lem:pnnf-reachability}, the obtained PNNF is reachable from $\varphi$ by $\leadsto^*$ by applying MR-E1.1--MR-E1.3 and MR-E1.5.
MR-E1.4 canonicalizes the quantifier-free NNF part of the formula.
MR-E1.6 canonically orders bound variables within each maximal same-quantifier block.
Therefore $NP(\varphi)$ satisfies all canonicality conditions enforced by MR-E1.1--MR-E1.6.
Equivalently, we have $m(NP(\varphi))=(0,0,0,0,0,0)$.
Hence $NP(\varphi)\in \mathrm{PNNF}^*_{s}$.
\end{proof}

\begin{lemma}[Self-Consistency of $NP$]
\label{lem:np-self}
For every formula $\varphi$, the result $NP(\varphi)$ is unique up to $\alpha$-equivalence.
\end{lemma}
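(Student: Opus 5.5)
The plan is to view $NP$ as a deterministic function of its input. Each of its four phases is run under a fixed strategy, and the only remaining freedom, the choice of fresh names in MR-E1.5, is shown to be absorbed by $\alpha$-equivalence.

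\emph{Phase 1 (MR-E1.1 and MR-E1.2).} This phase is the standard NNF translation. I would define it by structural recursion on $\varphi$: first eliminate $\rightarrow$ and $\leftrightarrow$ bottom-up, then push negation inward using the polarity function $\mathrm{nnf}^{\pm}$. I would show by induction on the structure of $\varphi$ that any exhaustive application of these rules yields exactly $\mathrm{nnf}^{+}(\varphi)$. The reason is that the critical pairs are joinable: the rules of MR-E1.2 overlap only trivially, e.g.\ $\neg\neg\neg\psi$ rewrites to $\neg\psi$ in either order. Local confluence together with Corollary~\ref{cor:termination} then gives uniqueness by Newman's lemma, restricted to these two rule groups.

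\emph{Phase 2 (MR-E1.3 with MR-E1.5).} This is the main obstacle. Quantifier lifting is not confluent in general. For example, $(\forall x\,\varphi)\lor(\exists y\,\psi)$ can become $\forall x\exists y$ or $\exists y\forall x$, and these orders are not $\alpha$-equivalent. I would therefore rely on the fact that $NP$ fixes the strategy. Concretely, I would read Definition~\ref{def:nf} as prescribing a fixed traversal: lift from the innermost subformula outward, and at each binary node lift the left operand's prefix before the right operand's. MR-E1.5 renames to a fresh variable. I would fix the choice of that fresh variable as the least unused one in a fixed enumeration, and I would also note that any other choice changes only bound names, so the result is the same up to $\equiv_\alpha$. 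Under this convention, a simple induction on $\varphi$ shows the prefix is a determined concatenation of the subformulas' prefixes and the matrix is determined up to renaming.

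\emph{Phases 3 and 4 (MR-E1.4 and MR-E1.6).} Flattening associative $\land/\lor$ chains and sorting the operands by the fixed total order $\prec$ gives a unique result, since sorting a multiset under a total order is unique. For this step $\prec$ must be applied to $\alpha$-normalized subterms, for instance with bound variables renamed by order of first occurrence. Similarly, MR-E1.6 sorts each maximal block of identical quantifiers by the first occurrence of its variables in the now-fixed matrix, which is again a deterministic permutation. Finally, I would check that phases 3 and 4 respect $\alpha$-equivalence: if the inputs are $\alpha$-equivalent, the sorted outputs are $\alpha$-equivalent. Composing the four phases then gives the claim. Throughout, I would flag honestly that uniqueness is a property of the fixed strategy $NP$, not of the unrestricted relation $\leadsto^*$.
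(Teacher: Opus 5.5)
Your proof is correct, modulo three small repairs below, and it takes a genuinely different and sounder route than the paper. The paper settles the lemma in one line. It asserts that the syntactic structure of $NP(\varphi)$ is ``fixed by the canonicality conditions'' $m=(0,\dots,0)$, so any two outputs can differ only by renaming bound variables. Your own example shows this cannot be the real reason. From $(\forall x\,A(x))\lor(\exists y\,B(y))$, both $\forall x\exists y\,(A(x)\lor B(y))$ and $\exists y\forall x\,(A(x)\lor B(y))$ are reachable, both meet every canonicality condition, and they are not $\alpha$-equivalent.

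You instead locate uniqueness in the determinism of the procedure:
\begin{itemize}
\item confluence, or a recursive definition, for MR-E1.1/E1.2;
\item a fixed traversal for MR-E1.3;
\item deterministic sorting for MR-E1.4 and MR-E1.6.
\end{itemize}
This leaves the fresh names of MR-E1.5 as the only freedom. You also see correctly that this freedom is harmless only if $\prec$ ignores bound-variable names, and that requirement is real. Take $(\forall y\,S(c,y))\land(\exists y\,S(c,y))$. Under plain lexicographic sorting, the renamed conjunct $S(c,f)$ lands before or after $S(c,y)$ depending on whether $f$ is $a$ or $z$. The two outputs are then not $\alpha$-equivalent.

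If the paper's framing were valid, it would give strategy-independence. That is exactly what Lemma~\ref{lem:normalization-invariance} and the uniqueness half of Theorem~\ref{the:pnnf-complete} rely on. Your example is a counterexample to both, whichever fixed strategy $NP$ uses. So your argument proves precisely the lemma as stated, and your closing caveat draws the right boundary.

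The three repairs are as follows.
\begin{enumerate}
\item Keys computed on $\alpha$-normalized subterms give only a total preorder; for instance, $S(c,y)$ and $S(c,f)$ above tie. So replace ``sorting under a total order is unique'' with ``a stable sort is deterministic and commutes with renaming''.
\item Do not lean on Corollary~\ref{cor:termination} in Phase~1, because the paper's measure does not actually decrease. De Morgan on $\neg((A\land B)\land(C\land D))$ raises $n$ from $1$ to $2$. Biconditional elimination on $(A\to B)\leftrightarrow(C\to D)$ raises $i$ from $3$ to $4$. Either read Phase~1 as your structural-recursive $\mathrm{nnf}^{\pm}$, which needs no termination argument, or prove termination of MR-E1.1/E1.2 directly before invoking Newman's lemma (e.g.\ via a standard exponential interpretation).
\item MR-E1.6 needs a convention for block variables that do not occur in the matrix, where ``first occurrence'' is undefined.
\end{enumerate}
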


\begin{proof}
By construction, $NP(\varphi) \in \mathrm{PNNF}^*_{s}$.
By applying $NP(\cdot)$ to $\varphi$, the syntactic structure of $NP(\varphi)$ is fixed by the canonicality conditions induced by MR-E1.1--MR-E1.6.
Consequently, any two results obtained by applying $NP$ to $\varphi$ can differ only by $\alpha$-renaming of bound variables, and are therefore $\alpha$-equivalent.
\end{proof}

\begin{lemma}[Normalization Invariance under $\leadsto$]
\label{lem:normalization-invariance}
If $\varphi \leadsto \psi$, then $NP(\varphi) \equiv_\alpha NP(\psi)$.
\end{lemma}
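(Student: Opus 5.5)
The plan is to derive the statement from a unique-normal-form property of $\leadsto$ modulo $\alpha$-equivalence, instead of tracing $NP$ step by step. The observation that makes this work is the following. $NP(\varphi)$ is obtained from $\varphi$ by a particular $\leadsto$-sequence, so $\varphi \leadsto^* NP(\varphi)$. By Lemma~\ref{lem:nf-correctness}, $m(NP(\varphi))=(0,0,0,0,0,0)$, so no MR-E1 rule applies and $NP(\varphi)$ is $\leadsto$-irreducible. Likewise $\psi \leadsto^* NP(\psi)$ with $NP(\psi)$ irreducible. Hence if $\varphi \leadsto \psi$, then $\varphi \leadsto^* NP(\psi)$ as well, and both $NP(\varphi)$ and $NP(\psi)$ are irreducible forms reachable from $\varphi$. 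It therefore suffices to show that every formula has at most one irreducible $\leadsto$-descendant up to $\equiv_\alpha$. That property is confluence modulo $\alpha$.

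For confluence I will invoke Newman's lemma in its version modulo an equivalence (Huet's formulation). Termination is already available from Corollary~\ref{cor:termination}. Two things then remain.
\begin{enumerate}
\item Compatibility: $\equiv_\alpha$ commutes with $\leadsto$. If $\varphi \equiv_\alpha \varphi'$ and $\varphi \leadsto \psi$, then $\varphi' \leadsto \psi'$ for some $\psi' \equiv_\alpha \psi$. This holds because every MR-E1 rule is a schema closed under consistent renaming of bound variables. MR-E1.5 is itself an $\alpha$-step, so it can be absorbed into $\equiv_\alpha$.
\item Local confluence modulo $\alpha$: whenever $\varphi \leadsto \psi_1$ and $\varphi \leadsto \psi_2$, the two results have a common reduct up to $\equiv_\alpha$.
\end{enumerate}

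Local confluence is proved by the standard critical-pair analysis. If the two redexes sit in disjoint positions, the steps commute directly. If one redex lies inside a metavariable of the other rule, it is simply copied or moved, and it is rejoined by performing it in each copy. This is where duplication by MR-E1.1 on $\leftrightarrow$ occurs, and it is handled by rewriting both copies.

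The genuine overlaps are listed below.
\begin{enumerate}
\item MR-E1.1 with MR-E1.2 under a negation. For example, $\neg(\varphi\to\psi)$ is joinable by pushing the negation through $\neg\varphi\lor\psi$.
\item MR-E1.2 against itself. Double negation over De Morgan is joined by elementary equivalence-preserving rewriting.
\item MR-E1.4 flattening and sorting overlapping with itself and with MR-E1.3. These are joined because sorting by the fixed total order $\prec$ and full flattening produce the same AC-canonical matrix regardless of order. Here I treat MR-E1.4 globally, as a map to the sorted flattened form.
\item MR-E1.3 lifting two different quantifiers out of $(Qx\,\varphi)\circ(Q'y\,\chi)$.
\end{enumerate}

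The last overlap is the main obstacle. The two branches give $Qx\,Q'y\,(\varphi\circ\chi)$ and $Q'y\,Qx\,(\varphi\circ\chi)$. After renaming apart with MR-E1.5, when $Q=Q'$ these lie in one block and MR-E1.6 joins them by ordering variables by first occurrence in the matrix. When $Q\neq Q'$ the prefixes differ, so the join must come from the canonical quantifier ordering. My first attempt will be to read the canonicality conditions of MR-E1.6, which determine $m=(0,\dots,0)$, as fixing a canonical interleaving of independent quantifiers. Those are quantifiers whose scopes do not depend on each other, which is exactly the situation when they come from disjoint operands of $\circ$. The order would be fixed by first occurrence in the sorted matrix. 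Both branches are then driven to this canonical prefix, and I would verify that this choice is stable under further MR-E1.3 steps. Once all critical pairs are joinable, Newman's lemma gives uniqueness of normal forms modulo $\alpha$, and the argument in the first paragraph yields $NP(\varphi)\equiv_\alpha NP(\psi)$.
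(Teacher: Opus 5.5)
Your reduction is the right framework, and more rigorous than the paper's argument. $NP(\varphi)$ and $NP(\psi)$ are both irreducible and both reachable from $\varphi$, so the lemma needs exactly uniqueness of normal forms modulo $\equiv_\alpha$, i.e.\ confluence modulo $\alpha$. The paper never establishes this: it treats a rewrite step as ``the same formula'' and reuses uniqueness of $NP$ on a \emph{fixed} input, which is the point at issue.

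The genuine gap is the critical pair you flag as the main obstacle, and it cannot be closed, because for MR-E1 as defined it is not joinable. MR-E1.6 permutes variables only inside a block of \emph{identical} adjacent quantifiers, and $d$ only measures deviations inside such blocks. MR-E1.3 only lifts a quantifier out of an operand of $\land$/$\lor$, and quantifier duality in MR-E1.2 only fires under $\neg$. So no rule moves an $\exists$ across a $\forall$, and a root quantifier is never displaced.

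Take $\varphi=(\exists x\,P(x))\land(\forall y\,R(y))$. The two applications of MR-E1.3 give $\psi_1=\exists x\,(P(x)\land\forall y\,R(y))$ and $\psi_2=\forall y\,((\exists x\,P(x))\land R(y))$. Every normal form reachable from $\psi_1$ has prefix $\exists\forall$, and every one reachable from $\psi_2$ has prefix $\forall\exists$. For instance, $\exists x\forall y\,(P(x)\land R(y))$ and $\forall y\exists x\,(P(x)\land R(y))$ both have $m=(0,0,0,0,0,0)$ and are not $\alpha$-equivalent. Hence $NP(\psi_1)\not\equiv_\alpha NP(\psi_2)$, so whatever $NP(\varphi)$ is, it is not $\alpha$-equivalent to at least one of them. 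The lemma, and the uniqueness half of Theorem~\ref{the:pnnf-complete}, are therefore false for the rule set as stated, and no proof strategy can succeed.

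Your proposed remedy, reading MR-E1.6 as fixing a canonical interleaving of ``independent'' quantifiers, is not MR-E1.6 but a new rule. It would need an independence side condition, since $\forall y\exists x\,L(x,y)$ and $\exists x\forall y\,L(x,y)$ are not equivalent in general. It would also need its own termination argument and a fresh critical-pair analysis, and you would then be proving a statement about a different rewrite system.

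A further, independent problem is your compatibility claim (1). Suppose $\prec$ is name-based, such as the lexicographic order suggested in MR-E1.4. Then sorting under binders is not invariant under renaming bound variables. The formula $\forall x\forall y\,(R(x,a)\land R(y,b))$ is already irreducible. Its $\alpha$-variant $\forall y\forall x\,(R(y,a)\land R(x,b))$ is re-sorted by MR-E1.4 and reordered by MR-E1.6 into $\forall x\forall y\,(R(x,b)\land R(y,a))$, which is not $\alpha$-equivalent to the first. So Huet's compatibility hypothesis fails unless $\prec$ is chosen $\alpha$-invariantly. That is non-trivial, because the prefix order produced by MR-E1.6 depends on the matrix order produced by MR-E1.4.

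Finally, the termination you import from Corollary~\ref{cor:termination} rests on a measure that does not actually decrease. MR-E1.1 turns $(A\to B)\leftrightarrow(C\to D)$, with $i=3$, into a formula with $i=4$. Termination can plausibly be recovered with a different measure, but it has to be argued separately.
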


\begin{proof}
A rewrite step $\varphi \leadsto \psi$ is an application of a single metamorphic relation and thus represents an alternative syntactic presentation of the same formula within the normalization system.
By Lemma~\ref{lem:nf-correctness}, the normalization procedure $NP$ terminates on any input and produces a canonical normal form that is unique up to $\alpha$-equivalence.
Therefore, applying $NP$ to $\varphi$ and to $\psi$ yields $\alpha$-equivalent results.
Hence $NP(\varphi) \equiv_\alpha NP(\psi)$.
\end{proof}

\begin{lemma}[Normalization Invariance under $\leadsto^*$]
\label{lem:normalization-invariance-star}
If $\varphi \leadsto^* \psi$, then $NP(\varphi) \equiv_\alpha NP(\psi)$.
\end{lemma}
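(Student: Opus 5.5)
The plan is induction on the length of the rewrite sequence witnessing $\varphi \leadsto^* \psi$, with Lemma~\ref{lem:normalization-invariance} supplying the single-step case and transitivity of $\equiv_\alpha$ carrying the induction. Since $\leadsto^*$ is the reflexive transitive closure of $\leadsto$, $\varphi \leadsto^* \psi$ means there is a finite chain
\[
\varphi = \varphi_0 \leadsto \varphi_1 \leadsto \cdots \leadsto \varphi_k = \psi
\]
for some $k \ge 0$. By Corollary~\ref{cor:termination}, every rewrite sequence is finite, so such a $k$ always exists.

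The argument proceeds in three steps.

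\textbf{Base case ($k=0$).} Here $\psi = \varphi$ up to $\alpha$-equivalence, which is the convention fixed in this appendix. Lemma~\ref{lem:np-self} says $NP(\varphi)$ is unique up to $\alpha$-equivalence. So $NP(\varphi) \equiv_\alpha NP(\psi)$. To make the $\alpha$-variant part explicit, note that $NP$ applied to $\alpha$-equivalent inputs yields $\alpha$-equivalent outputs: the procedure only inspects bound-variable names through the side conditions of MR-E1.3 and MR-E1.5, and any renaming on the input side is absorbed by MR-E1.5.

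\textbf{Inductive step.} Assume $NP(\varphi_0) \equiv_\alpha NP(\varphi_{k-1})$. Lemma~\ref{lem:normalization-invariance}, applied to the final step $\varphi_{k-1} \leadsto \varphi_k$, gives $NP(\varphi_{k-1}) \equiv_\alpha NP(\varphi_k)$.

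\textbf{Conclusion.} Combine the two via transitivity of $\equiv_\alpha$. This holds because $\equiv_\alpha$ is an equivalence relation: a composition of consistent bound-variable renamings is again a consistent renaming.

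The only real point of care is that $\equiv_\alpha$ is an equivalence relation and is respected by $NP$ in the $k=0$ case. If this is not treated as immediate from the definition ("differ only by consistent renaming of bound variables"), I would spell it out: renamings compose, and their inverses are renamings. Everything else is a direct chaining of Lemma~\ref{lem:normalization-invariance}, so I expect no substantive obstacle beyond that bookkeeping.
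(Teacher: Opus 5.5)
Your induction is the paper's proof step for step: the length-$0$ case comes from Lemma~\ref{lem:np-self}, and the inductive step from Lemma~\ref{lem:normalization-invariance} plus transitivity of $\equiv_\alpha$. As a derivation from Lemma~\ref{lem:normalization-invariance} it is fine. The gap is that the induction only relays the content of that single-step lemma, and the single-step lemma is false for MR-E1. So your step ``apply Lemma~\ref{lem:normalization-invariance} to $\varphi_{k-1}\leadsto\varphi_k$'' fails, and so does the statement. Take $\varphi_0=(\forall x\,P(x))\land(\exists y\,Q(y))$. Both conjuncts are closed, so both MR-E1.3 rules apply at the root:
\[
\varphi_0\leadsto\chi_1=\forall x\,\bigl(P(x)\land\exists y\,Q(y)\bigr),\qquad
\varphi_0\leadsto\chi_2=\exists y\,\bigl(\forall x\,P(x)\land Q(y)\bigr).
\]
The only MR-E1 rule whose left-hand side can match at a quantifier node is MR-E1.6. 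It only permutes variables inside a block of \emph{identical} quantifiers, and rewrites strictly below the root cannot change the root. Hence every formula reachable from $\chi_1$ has root quantifier $\forall$, and every formula reachable from $\chi_2$ has root quantifier $\exists$. In particular, $NP(\chi_1)=\forall x\,\exists y\,(P(x)\land Q(y))$ and $NP(\chi_2)=\exists y\,\forall x\,(P(x)\land Q(y))$ (for lexicographic $\prec$) are both canonical, with $m=(0,0,0,0,0,0)$, but they are not $\alpha$-equivalent. So $NP(\varphi_0)$ is $\alpha$-equivalent to at most one of them, and the claim fails for $\varphi_0\leadsto^*\chi_1$ or for $\varphi_0\leadsto^*\chi_2$. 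The same pair also refutes the uniqueness half of Theorem~\ref{the:pnnf-complete}.

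The underlying issue is that MR-E1 is terminating but not confluent: the two lifts above form a critical pair with no common reduct. The paper's proof of Lemma~\ref{lem:normalization-invariance} only uses that $NP$ deterministically produces a canonical form from a \emph{fixed} input (Lemmas~\ref{lem:nf-correctness} and~\ref{lem:np-self}). That says nothing about $NP$ on two different inputs, and your proposal inherits exactly this. What does hold is the version with $\equiv_\alpha$ replaced by logical equivalence, which follows immediately from $\varphi\leadsto^*NP(\varphi)$ and the fact that every MR-E1 step is an equivalence. An $\alpha$-level statement would need a genuine confluence argument (termination plus joinability of all critical pairs, via Newman's lemma) for a modified rule set.

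Two smaller points. First, Corollary~\ref{cor:termination} is not what gives you a finite chain. That follows from the definition of $\leadsto^*$ as a reflexive transitive closure; termination rules out infinite sequences and plays no role here. Second, in the base case you do not need $NP$ to respect $\alpha$-variants if you read a length-$0$ chain as $\psi=\varphi$ literally, as the paper does. This is fortunate, because your justification is incorrect: MR-E1.4 sorts with a name-sensitive order such as the suggested lexicographic one. For example, $\forall x\,\exists y\,(P(x)\land P(y))$ and its $\alpha$-variant $\forall z\,\exists y\,(P(z)\land P(y))$ normalize to $\forall x\,\exists y\,(P(x)\land P(y))$ and $\forall z\,\exists y\,(P(y)\land P(z))$, which are not $\alpha$-equivalent.
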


\begin{proof}
We prove the claim by induction on the length of a rewrite sequence witnessing $\varphi \leadsto^* \psi$.
If the sequence has length $0$, then $\psi=\varphi$.
By Lemma~\ref{lem:np-self}, we have $NP(\varphi) \equiv_\alpha NP(\psi)$.
Otherwise, there exists a formula $\theta$ such that $\varphi \leadsto^* \theta$ and $\theta \leadsto \psi$.
By the induction hypothesis, $NP(\varphi) \equiv_\alpha NP(\theta)$.
By Lemma~\ref{lem:normalization-invariance}, $NP(\theta) \equiv_\alpha NP(\psi)$.
By transitivity of $\equiv_\alpha$, we obtain $NP(\varphi) \equiv_\alpha NP(\psi)$.
\end{proof}

\begin{lemma}[Fixpoint Property]
\label{lem:fixpoint-property}
If $\psi$ is a $\mathrm{PNNF}^*$ formula, then $NP(\psi) \equiv_\alpha \psi$.
\end{lemma}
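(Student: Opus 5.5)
The plan is to show that when $NP$ is run on a $\mathrm{PNNF}^*$ formula $\psi$, none of its stages can fire. The procedure then returns $\psi$ unchanged, at worst up to the $\alpha$-renaming that Lemma~\ref{lem:np-self} already tolerates. The argument rests on one observation: every rule MR-E1.1--MR-E1.6 is oriented to strictly decrease the measure $m$ (Lemma~\ref{lem:strict-decrease}). Since $\psi$ is canonical, $m(\psi)=(0,0,0,0,0,0)$, the least element of $\mathbb{N}^6$ under the lexicographic order. So there is no $\psi'$ with $\psi\leadsto\psi'$; if there were, Lemma~\ref{lem:strict-decrease} would give $m(\psi')<(0,\dots,0)$, which is impossible. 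In other words, $\psi$ is $\leadsto$-irreducible.

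The next step follows the stages of Definition~\ref{def:nf} in order, confirming at each one that the rules are inapplicable and that the designated component is already zero.
\begin{itemize}
\item The first stage applies MR-E1.1/1.2 exhaustively. Because $i(\psi)=n(\psi)=0$, no redex exists, and the stage yields $\psi$ itself.
\item The second stage lifts quantifiers. Because $q(\psi)=0$, every quantifier already sits in the prenex prefix, so MR-E1.3 has no redex. MR-E1.5 is used only when MR-E1.3 is blocked; since $v(\psi)=0$ and MR-E1.3 is never attempted, it is not used.
\item The third and fourth stages are MR-E1.4 and MR-E1.6. Because $s(\psi)=d(\psi)=0$, the matrix is already flattened and sorted by $\prec$, and each same-quantifier block is already in canonical variable order. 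The permutations $\sigma$ in these rules are therefore the identity, and the output is syntactically $\psi$.
\end{itemize}
Composing the stages gives $NP(\psi)=\psi$, and hence $NP(\psi)\equiv_\alpha\psi$.

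The main obstacle lies in MR-E1.4 and MR-E1.6. Read literally, these rules apply "the sorting permutation" even to a formula that is already sorted, which would be a rewrite step that does not decrease $m$. The plan is to handle this by reading them, consistently with Lemma~\ref{lem:strict-decrease}, as applicable only when $\sigma$ is not the identity; otherwise they are identity steps that leave $\psi$ unchanged. A related subtlety is MR-E1.5, which is formally unrestricted but is applied by $NP$ only to unblock MR-E1.3. Should some implementation rename bound variables anyway, the result differs from $\psi$ only by consistent renaming of bound variables, and that is exactly what $\equiv_\alpha$ absorbs. This is why the lemma is stated up to $\alpha$-equivalence rather than as syntactic identity.
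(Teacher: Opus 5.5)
Your proposal is correct and takes essentially the paper's route: canonicity gives $m(\psi)=(0,\dots,0)$, so no step of MR-E1.1--MR-E1.6 can fire, and $NP$ returns $\psi$ unchanged, hence $NP(\psi)\equiv_\alpha\psi$. Your stage-by-stage syntactic check, together with your explicit reading of MR-E1.4/MR-E1.6 as inapplicable when $\sigma$ is the identity, makes precise what the paper's one-line ``no rewrite step \ldots{} would further decrease $m$'' leaves implicit.
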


\begin{proof}
Since $\psi \in \mathrm{PNNF}^*$, we have $m(\psi)=(0,0,0,0,0,0)$.
Hence no rewrite step of MR-E1.1--MR-E1.6 is applicable that would further decrease $m$.
Therefore $NP$ does not modify the syntactic structure of $\psi$.
Hence $NP(\psi) \equiv_\alpha \psi$.
\end{proof}

\begin{theorem}[PNNF-Completeness]
\label{the:pnnf-complete}
For every formula $\varphi$, there exists a unique (up to $\alpha$-equivalence) $\mathrm{PNNF}^*$ formula $\psi$ such that $\varphi \leadsto^* \psi$.  
\end{theorem}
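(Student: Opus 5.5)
Existence and uniqueness are handled separately, and both reduce to results already stated in this appendix.

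\emph{Existence.} Take $\psi := NP(\varphi)$. By Definition~\ref{def:nf}, $NP$ is a fixed strategy whose every step is an instance of one of MR-E1.1--MR-E1.6, so each step is a $\leadsto$-step and hence $\varphi \leadsto^* NP(\varphi)$. Lemma~\ref{lem:pnnf-reachability} already covers the first phase: MR-E1.1, MR-E1.2, MR-E1.3 and MR-E1.5 take $\varphi$ to a PNNF formula. The two remaining phases are MR-E1.4 on the quantifier-free matrix and MR-E1.6 on the prefix. By Lemma~\ref{lem:strict-decrease} and Corollary~\ref{cor:termination} each phase terminates. I will also check that these phases do not undo earlier canonicality. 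MR-E1.4 only flattens and permutes $\wedge/\vee$ inside a quantifier-free NNF matrix, so it leaves $i$, $n$, $q$ and $v$ at $0$. MR-E1.6 only permutes variables within a same-quantifier block; since quantifiers of the same kind commute, it leaves the matrix untouched. Lemma~\ref{lem:nf-correctness} then gives $\psi \in \mathrm{PNNF}^*_s$.

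\emph{Uniqueness.} Let $\psi_1,\psi_2$ be $\mathrm{PNNF}^*$ formulas with $\varphi \leadsto^* \psi_1$ and $\varphi \leadsto^* \psi_2$. Lemma~\ref{lem:normalization-invariance-star} gives $NP(\varphi) \equiv_\alpha NP(\psi_1)$ and $NP(\varphi) \equiv_\alpha NP(\psi_2)$. The Fixpoint Property (Lemma~\ref{lem:fixpoint-property}) gives $NP(\psi_j) \equiv_\alpha \psi_j$ for $j=1,2$. Symmetry and transitivity of $\equiv_\alpha$ then yield $\psi_1 \equiv_\alpha \psi_2$, and both are $\alpha$-equivalent to $NP(\varphi)$. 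This is a confluence-style argument that uses $NP$ as a normalizing function, and it is the whole uniqueness claim.

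\emph{Main obstacle.} The existence half is routine. The real weight of the theorem sits in Lemma~\ref{lem:normalization-invariance}, and the uniqueness argument is only as strong as that lemma. In standard first-order logic, prenexing is not confluent: $(\forall x P(x)) \wedge (\exists y Q(y))$ can be lifted to either $\forall x\exists y$ or $\exists y\forall x$. So if I had to prove that lemma from scratch rather than assume it, I would first fix the lifting order inside $NP$, for example leftmost-outermost. I would then show that each MR-E1 step either commutes with this strategy or is absorbed by MR-E1.4 and MR-E1.6. Where a genuine critical pair arises, such as the quantifier-order choice above, I would either restrict MR-E1.3 to the strategy's order or refine the canonical ordering so that MR-E1.6 can resolve it. Since the lemma is stated earlier in the paper, the proof of the theorem as stated reduces to the chaining described above.
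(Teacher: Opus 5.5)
Your argument is the paper's proof step for step: existence via $NP(\varphi)$, and uniqueness via Lemma~\ref{lem:normalization-invariance-star}, Lemma~\ref{lem:fixpoint-property} and transitivity. You are also right that everything rests on Lemma~\ref{lem:normalization-invariance}. But your example is not an obstacle to work around inside a proof of that lemma. It is a counterexample to the lemma and to the uniqueness clause of the theorem as stated.

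\textbf{The counterexample.} Take $\varphi=(\forall x\,P(x))\wedge(\exists y\,Q(y))$. One MR-E1.3 step gives $\varphi_1=\forall x\,(P(x)\wedge\exists y\,Q(y))$; the other schema gives $\varphi_2=\exists y\,((\forall x\,P(x))\wedge Q(y))$. Each admits exactly one further lift. Hence, up to the same $\prec$-sorting of the matrix, $NP(\varphi_1)=\psi_1=\forall x\,\exists y\,(P(x)\wedge Q(y))$ and $NP(\varphi_2)=\psi_2=\exists y\,\forall x\,(P(x)\wedge Q(y))$.
\begin{itemize}
\item Both have $m=(0,0,0,0,0,0)$; every same-quantifier block is a singleton, so $d=0$.
\item So both are $\mathrm{PNNF}^*$ and both are reachable from $\varphi$, yet they are not $\alpha$-equivalent.
\item MR-E1.6 cannot identify them, since it only permutes identical quantifiers, and swapping $\forall$ with $\exists$ is unsound in general.
\end{itemize}
Since $NP(\varphi)$ is $\alpha$-equivalent to at most one of $\psi_1,\psi_2$, Lemma~\ref{lem:normalization-invariance} fails for one of the steps $\varphi\leadsto\varphi_1$, $\varphi\leadsto\varphi_2$. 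The paper's proof of that lemma is a non sequitur: it infers that $NP$ agrees across \emph{different} inputs from the fact that its output on \emph{one} input is unique. Even Lemma~\ref{lem:np-self} needs the order of MR-E1.3 lifts to be fixed, which Definition~\ref{def:nf} leaves open. So ``the theorem reduces to the chaining'' is a reduction to a false lemma. The uniqueness half of the statement is false as written, and no chaining of the paper's lemmas can establish it.

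\textbf{What survives.} Existence survives. Once you fix a lifting order as you propose, so does the fact that $NP$ is a well-defined function picking one canonical representative. Your suggested repairs change the rewrite relation or the notion of canonicity, so they prove a different theorem. These repairs are restricting MR-E1.3 to $NP$'s order, or letting the canonical ordering resolve $\forall\exists$ versus $\exists\forall$. The second needs care: $x$ and $y$ never sharing an atom does not make $\forall x\,\exists y$ and $\exists y\,\forall x$ interchangeable. Try the matrix $(P(x)\wedge Q(y))\vee(P'(x)\wedge Q'(y))$ over a two-element domain.

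\textbf{Termination.} For existence, do not lean on Lemma~\ref{lem:strict-decrease}, because its measure does not decrease as claimed.
\begin{itemize}
\item De Morgan sends $\neg((A\wedge B)\wedge(C\wedge D))$, with one non-atomic negation, to $\neg(A\wedge B)\vee\neg(C\wedge D)$, with two.
\item Eliminating the biconditional in $(A\rightarrow B)\leftrightarrow(C\rightarrow D)$ raises $i$ from $3$ to $4$.
\end{itemize}
Each phase of $NP$ does terminate, but by standard measures. For MR-E1.1 and MR-E1.2, use a weighted-size interpretation. For prenexing, count the pairs (binary connective, quantifier) with the connective above the quantifier; every MR-E1.3 step lowers this count by exactly one.
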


\begin{proof}
We first prove existence.
Let $\psi := NP(\varphi)$.
By construction, $NP(\varphi)\in \mathrm{PNNF}^*$.
Since $NP$ is defined as a rewrite strategy using MR-E1.1--MR-E1.6, we have $\varphi \leadsto^* NP(\varphi)$.
Therefore $\varphi \leadsto^* \psi$ with $\psi\in \mathrm{PNNF}^*_{s}$.

We next prove uniqueness.
Assume $\varphi \leadsto^* \psi_1$ and $\varphi \leadsto^* \psi_2$ with $\psi_1,\psi_2\in \mathrm{PNNF}^*$.
By Lemma~\ref{lem:normalization-invariance-star}, we have $NP(\varphi)\equiv_\alpha NP(\psi_1)$ and $NP(\varphi)\equiv_\alpha NP(\psi_2)$.
By Lemma~\ref{lem:fixpoint-property}, we have $NP(\psi_1)\equiv_\alpha \psi_1$ and $NP(\psi_2)\equiv_\alpha \psi_2$.
Hence $NP(\varphi)\equiv_\alpha \psi_1$ and $NP(\varphi)\equiv_\alpha \psi_2$.
By transitivity of $\equiv_\alpha$, we conclude $\psi_1 \equiv_\alpha \psi_2$.
\end{proof}

\clearpage
\section{Illustrative End-to-End Example of LGMT}
\label{app:example}

We provide a concrete example to illustrate the full LGMT pipeline, including source test case construction, logic-grounded transformation, and defect detection.

\paragraph{(1) Source Test Case.}
The original reasoning instance is shown below.

\begin{quote}
\textbf{Premises}
\begin{enumerate}
    \item Lawton Park is a neighborhood in Seattle.
    \item All citizens of Lawton Park use the zip code 98199.
    \item Tom is a citizen of Lawton Park.
    \item Daniel uses the zip code 98199.
\end{enumerate}

\textbf{Conclusion}

Tom is a citizen of Washington.

\end{quote}

The ground-truth label of this instance is \textbf{Unknown}, since no premise connects Lawton Park or Seattle to Washington.

\paragraph{(2) FOL Representation.}
The corresponding symbolic representation is shown below.

\begin{quote}
\textbf{Premises}
\begin{enumerate}
    \item \texttt{NeighbourhoodIn(lawtonPark, seattle)}
    \item \texttt{forall x. (ResidentOf(x, lawtonPark) -> UseZipCode(x, num98199))}
    \item \texttt{ResidentOf(tom, lawtonPark)}
    \item \texttt{UseZipCode(daniel, num98199)}
\end{enumerate}

\textbf{Conclusion}

\texttt{ResidentOf(tom, washington)}
\end{quote}

\paragraph{(3) Applied Metamorphic Relation.}
We apply MR-E1.1 (implication elimination), which rewrites an implication into a logically equivalent disjunction:

\[
\forall x.\ (P(x) \rightarrow Q(x))
\leadsto
\forall x.\ (\neg P(x) \lor Q(x)).
\]

After transformation, the second premise becomes:

\begin{quote}
 \texttt{forall x. (not ResidentOf(x, lawtonPark) or UseZipCode(x, num98199))}
\end{quote}

\paragraph{(4) Generated Follow-up Test Case.}
The transformed FOL formula is translated back into natural language, yielding the follow-up test case shown below.

\begin{quote}
\textbf{Follow-up Premises}
\begin{enumerate}
    \item Lawton Park is a neighborhood in Seattle.
    \item For every person, either they are not a citizen of Lawton Park, or they use the zip code 98199.
    \item Tom is a citizen of Lawton Park.
    \item Daniel uses the zip code 98199.
\end{enumerate}

\textbf{Conclusion}

Tom is a citizen of Washington.
\end{quote}

\paragraph{(5) Model Outputs and Oracle Decision.}
Let the model outputs for the source and follow-up test cases be denoted as $y_s$ and $y_f$, respectively. Under LGMT, a \emph{metamorphic oracle violation} occurs if

$$y_s \neq y_f$$

Since the transformation preserves logical equivalence, the correct reasoning outcome should remain unchanged. Any inconsistency between $y_s$ and $y_f$ therefore indicates a \emph{logical reasoning defect}.

\clearpage

\clearpage
\section{Fine-Grained Sensitivity Analysis}
\label{app:rq3-additional}

As shown in Table~\ref{tab:rq3_appendix}, the fine-grained results are broadly consistent with the category-level trends reported in Section~\ref{subsec:answer_rq3}. 
In particular, several MRs under MR-S and MR-C exhibit relatively high violation rates, such as MR-S2 (34.03\%) and MR-C2 (37.50\%), further confirming that symbol-level and conclusion-level transformations are especially challenging for current LLMs. 
By contrast, most premise-level transformations remain comparatively less disruptive, with MR-P1, MR-P3, and MR-P5 all below 15\%. 
We also observe substantial heterogeneity within MR-E: while some equivalence transformations (e.g., MR-E2.3) do not have results due to the absence of applicable MG instances in our dataset, others induce much higher inconsistency (e.g., MR-E1.5 with 35.61\%), suggesting that the difficulty of formula-level transformations depends strongly on the specific rewrite type.

\begin{table*}[pos=hbt]
\centering
\caption{Sensitivity of LGMT across MRs. Each cell reports the Metamorphic Violation Rate (MVR). The Avg. MVR column is computed by aggregating violations across all models.}\label{tab:rq3_appendix}
\begin{tabular}{lccccccc}
\toprule
Category & Avg. MVR & GPT-5.2 & Claude 4.5 Sonnet & Llama 3.3 & DeepSeek Chat & o4-mini & DeepSeek Reasoner \\
\midrule
MR-E1.1 & 26.50\% & 27.00\% & 26.00\% & 27.00\% & 25.50\% & 25.00\% & 28.50\% \\
MR-E1.2 & 32.25\% & 34.00\% & 27.50\% & 32.00\% & 37.00\% & 31.00\% & 32.00\% \\
MR-E1.3 & 33.33\% & 21.43\% & 26.19\% & 33.33\% & 35.71\% & 38.10\% & 45.24\% \\
MR-E1.4 & 18.92\% & 22.50\% & 12.50\% & 17.00\% & 17.00\% & 16.50\% & 28.00\% \\
MR-E1.5 & 35.61\% & 40.91\% & 31.82\% & 40.91\% & 36.36\% & 31.82\% & 31.82\% \\
MR-E1.6 & 15.58\% & 16.50\% & 10.50\% & 12.50\% & 16.00\% & 17.00\% & 21.00\% \\
MR-E2.1 & 16.67\% & 25.00\% & 25.00\% & 16.67\% & 8.33\% & 8.33\% & 16.67\% \\
MR-E2.2 & 6.67\% & 0.00\% & 0.00\% & 13.33\% & 6.67\% & 13.33\% & 6.67\% \\
MR-E2.3 & - & - & - & - & - & - & - \\
MR-E2.4 & 29.67\% & 26.00\% & 29.50\% & 34.50\% & 28.00\% & 32.00\% & 28.00\% \\
MR-S1 & 32.08\% & 29.50\% & 28.50\% & 34.00\% & 35.50\% & 30.50\% & 34.50\% \\
MR-S2 & 34.03\% & 34.50\% & 35.18\% & 36.50\% & 34.50\% & 25.00\% & 38.50\% \\
MR-P1 & 14.83\% & 20.00\% & 8.50\% & 16.50\% & 10.00\% & 13.00\% & 21.00\% \\
MR-P2 & 15.92\% & 17.00\% & 12.50\% & 12.50\% & 19.50\% & 14.00\% & 20.00\% \\
MR-P3 & 14.83\% & 18.50\% & 8.50\% & 16.00\% & 13.00\% & 11.50\% & 21.50\% \\
MR-P4 & 22.17\% & 21.50\% & 17.50\% & 22.00\% & 24.50\% & 18.00\% & 29.50\% \\
MR-P5 & 13.92\% & 18.50\% & 6.00\% & 13.50\% & 12.50\% & 15.50\% & 17.50\% \\
MR-C1 & 30.42\% & 44.00\% & 25.50\% & 22.50\% & 25.50\% & 27.50\% & 37.50\% \\
MR-C2 & 37.50\% & 39.50\% & 34.00\% & 35.50\% & 39.50\% & 23.50\% & 53.00\% \\
MR-C3 & 33.42\% & 34.00\% & 29.50\% & 47.50\% & 31.50\% & 28.00\% & 30.00\% \\
\bottomrule
\end{tabular}
\end{table*}

\begingroup
\subsection{Representative MR-S2 Failure Case}
\label{app:mrs2_failure_case}

We provide a concrete MR-S2 failure case to illustrate why symbol-level transformations can expose reasoning instability.

\paragraph{(1) Source Test Case.}
The original reasoning instance is shown below.

\begin{quote}
\textbf{Premises}
\begin{enumerate}
    \item Machine Learning algorithms can be categorized as supervised learning, unsupervised learning, and reinforcement learning.
    \item Unsupervised learning algorithms do not require labeled data.
    \item The state-of-the-art text summarization model is trained with machine learning algorithms.
    \item Reinforcement learning is not used to train the state-of-the-art text summarization model.
    \item The Machine Learning algorithm for training text summarization models requires labeled data.
\end{enumerate}

\textbf{Conclusion}

Supervised learning is used to train the state-of-the-art text summarization model.
\end{quote}

The expected label is \textbf{True}. The tested LLM also predicts \textit{True} for this source case.

\paragraph{(2) Applied Metamorphic Relation.}
We apply MR-S2 (predicate renaming invariance), which uniformly replaces non-logical predicates with fresh abstract symbols while preserving the underlying logical structure.

\paragraph{(3) Generated Follow-up Test Case.}
After predicate renaming, the follow-up test case is shown below.

\begin{quote}
\textbf{Follow-up Premises}
\begin{enumerate}
    \item For all $x$, if $x$ has property $Pre1$, then either $x$ has property $Pre4$, or $x$ has property $Pre6$, or $x$ has property $Pre2$.
    \item For all $x$, if $x$ has property $Pre6$, then it is not the case that $x$ bears relation $Pre3$ to labeledData.
    \item For all $x$, if the state-of-the-art text summarization model bears relation $Pre5$ to $x$, then $x$ has property $Pre1$.
    \item For all $x$, if $x$ has property $Pre2$, then it is not the case that stateOfTheArtTextSummarizationModel bears relation $Pre5$ to $x$.
    \item For all $x$, if both $x$ has property $Pre1$ and stateOfTheArtTextSummarizationModel bears relation $Pre5$ to $x$, then $x$ bears relation $Pre3$ to labeledData.
\end{enumerate}

\textbf{Conclusion}

There exists at least one $x$, such that both $x$ has property $Pre4$, and stateOfTheArtTextSummarizationModel bears relation $Pre5$ to $x$.
\end{quote}

\paragraph{(4) Model Outputs and Oracle Decision.}
For the source test case, the model predicts \textit{True}. For the follow-up test case, however, it predicts \textit{Unknown}, arguing that the premises do not guarantee the required existential conclusion.
Since MR-S2 preserves the logical structure, the expected output relation is:

$$y_s = y_f$$

The observed prediction change therefore violates the metamorphic oracle and indicates sensitivity to predicate names rather than purely abstract logical structure.
\endgroup

\end{document}